\def\namedlabel#1#2{\begingroup
    #2%
    \def\@currentlabel{#2}%
    \phantomsection\label{#1}\endgroup
}
\theoremstyle{plain}
\newtheorem{theorem}{Theorem}[section]
\newtheorem{lemma}[theorem]{Lemma}
\newtheorem{claim}[theorem]{Claim} 
\newtheorem{corollary}[theorem]{Corollary}
\theoremstyle{definition}
\newtheorem{observation}[theorem]{Observation}
\newtheorem{remark}[theorem]{Remark}
\def\opt{\textsc{OPT}}
\def\scllp{\textsc{SimplifiedLP}}
\def\cllp{\textsc{FairRangeLP}}
\def\script#1{\mathcal{#1}}
\def\sV{\script{V}}
\def\sR{\script{R}}
\def\sP{\script{P}}
\def\sB{\script{B}}
\def\nn{\textsf{NN}}
\def\excess{\boldsymbol{f}}
\def\slack{\boldsymbol{slack}}
\title{Approximation Algorithms for Fair Range Clustering}
\author{S\`edjro S. Hotegni\thanks{African Institute for Mathematical Sciences--Rwanda. Email: \href{salomon.hotegni@aims.ac.rw}{salomon.hotegni@aims.ac.rw}} \and Sepideh Mahabadi\thanks{Microsoft Research--Redmond. Email: \href{smahabadi@microsoft.com}{smahabadi@microsoft.com}} \and Ali Vakilian\thanks{Toyota Technological Institute at Chicago (TTIC). Email: \href{mailto:vakilian@ttic.edu}{vakilian@ttic.edu}}}
\date{}
\begin{document}
\newcommand{\eps}{\varepsilon}%
\def\bar{\overline}
\def\floor#1{\lfloor {#1} \rfloor}
\def\ceil#1{\lceil {#1} \rceil}
\def\script#1{\mathcal{#1}}
\def\etal{\text{et al.}\xspace}
\def\Union{\bigcup}
\def\sep{\;|\;}
\def\card#1{|#1|}
\def\set#1{\{#1\}}
\def\E{\mathbf{E}}
\def\Var{\mathbf{Var}}

\newcommand{\argmax}{\mathrm{argmax}{}}
\newcommand{\argmin}{\mathrm{argmin}{}}

\def\wrt{\text{w.r.t.}\xspace}

\newcommand{\polylog}{\mathop{\mathrm{polylog}}}%
\newcommand{\poly}{\mathop{\mathrm{poly}}}%

\def\whp{\text{w.h.p.}}
\newcommand{\tldO}{\widetilde{O}}%
\newcommand{\tldOmega}{\widetilde{\Omega}}%
\newcommand{\tldTheta}{\widetilde{\Theta}}%

\def\sA{\script{A}}
\def\sB{\script{B}}
\def\sC{\script{C}}
\def\sD{\script{D}}
\def\sE{\script{E}}
\def\sF{\script{F}}
\def\sG{\script{G}}
\def\sI{\script{I}}
\def\sL{\script{L}}
\def\sK{\script{K}}
\def\sM{\script{M}}
\def\sN{\script{N}}
\def\sO{\script{O}}
\def\sP{\script{P}}
\def\sQ{\script{Q}}
\def\sR{\script{R}}
\def\sS{\script{S}}
\def\sT{\script{T}}
\def\sU{\script{U}}
\def\sV{\script{V}}
\def\sX{\script{X}}
\def\sY{\script{Y}}
\def\sW{\script{W}}
\def\sZ{\script{Z}}

\def\NP{\ensuremath{\mathrm{\mathbf{NP}}}}
\def\dist{\mathrm{dist}}
\def\lp{\script{LP}}
\def\scllp{\textsc{SimplifiedLP}}
\def\stllp{\textsc{StructuredLP}}
\def\stlp{\tiny{\mathrm{STLP}}}
\def\cllp{\textsc{FairRangeLP}}
\def\flp{\tiny{\mathrm{FLP}}}
\def\clla{\textsc{FacilityMatAlg}}
\def\cllc{\textsc{CenterPartAlg}}
\def\optcllp{BasicClusterLP}
\def\opt{\textsc{OPT}}
\def\rem{\mathrm{rem}}
\def\kcenter{k\mathrm{center}}

\newcommand\faircost{{\operatorname{fair-cost}}}
\newcommand\cost{{\operatorname{cost}}}
\newcommand\vol{{\operatorname{vol}}}
\def\R{\script{R}}
\newcommand{\sol}{\textsc{SOL}}%
\def\Y{\boldsymbol{Y}}
\def\X{\boldsymbol{X}}
\def\g{{\boldsymbol{g}}}
\def\solsp{\mathcal{P}}
\def\intSolSp{\mathcal{Q}}

\def\nn{\textsf{NN}}
\def\inc{\boldsymbol{inc}}
\def\slack{\boldsymbol{slack}}
\def\excess{\boldsymbol{f}}
\def\balpha{\boldsymbol{\alpha}}
\def\bbeta{\boldsymbol{\beta}}
\def\FacMat{\textsc{FacilityMatroid}}
\def\FairClust{\textsc{FairKClustering}}

\newcommand\ali[1]{\textcolor{red}{AV: #1}}

\maketitle
\thispagestyle{empty}
\begin{abstract}
    This paper studies the {\em fair range clustering} problem in which the data points are from different demographic groups and the goal is to pick $k$ centers with the {\em minimum clustering cost} such that each group is at least {\em minimally represented} in the centers set and no group {\em dominates} the centers set. 
    More precisely, given a set of $n$ points in a metric space $(P,d)$ where each point belongs to one of the $\ell$ different demographics (i.e., $P = P_1 \uplus P_2 \uplus \cdots \uplus P_\ell$) and a set of $\ell$ intervals $[\alpha_1, \beta_1], \cdots, [\alpha_\ell, \beta_\ell]$ on desired number of centers from each group, the goal is to pick a set of $k$ centers $C$ with minimum $\ell_p$-clustering cost (i.e., $(\sum_{v\in P} d(v,C)^p)^{1/p}$) such that for each group $i\in \ell$, $|C\cap P_i| \in [\alpha_i, \beta_i]$. In particular, the fair range $\ell_p$-clustering captures fair range $k$-center, $k$-median and $k$-means as its special cases.
    In this work, we provide efficient constant factor approximation algorithms for fair range $\ell_p$-clustering for all values of $p\in [1,\infty)$. 
\end{abstract}

\section{Introduction}
In recent years, the centroid-based clustering problem has been studied extensively from the fairness point of view. As in the human-centric applications, the input data usually comes from different demographics and the solution has supposedly some (in many cases, long-lasting) effects on the participants (e.g., college admissions, loan applications, and criminal justice), it is crucial to take into account the societal implications of the solution output by large scale automated processes in use. Specifically, since clustering is commonly used as a prepossessing step for more complicated ML pipelines, it is both easier and more effective to handle fairness consideration and bias reduction earlier in the pipeline rather than later.

Fair clustering was first studied by~\citet{chierichetti2017fair} and since then it has been studied with respect to various notions of fairness~\citep{bera2019fair,jung2019center,mahabadi2020individual,ahmadi2022individual,chen2019proportionally,abbasi2020fair,ghadiri2020fair,brubach2020pairwise,brubach2021fairness}. Motivated by the application of centroid-based clustering as a means of data summarization (e.g.,~\citep{moens1999abstracting,girdhar2012efficient}) for socioeconomic data,~\citet*{kleindessner2019fair} studied a notion of fair clustering in which the points belong to disjoint protected groups $P_1, \cdots, P_\ell$, and the goal is to pick exactly $k_i$ centers from each population $P_i$, s.t. it minimizes the $k$-center clustering cost (i.e., maximum distance of any point to the selected centers), where $k=\sum_{i\in [\ell]}k_i$.   

To give an example, consider an image search system. In practice, when a query is made, the user will only check the first few images output by the system. Those images (say, the first $k$ ones) act as a summary or representative of the relevant images to the searched query. Notably,~\cite{kay2015unequal} observed that in a few jobs, including CEO, women are significantly underrepresented in Google image search results.\footnote{More recent studies also show that this phenomenon of unfairness in image search results is not fully resolved yet, e.g.,~\citep{feng2022has}.} An approach to get around this disparity, as proposed by~\citet{kleindessner2019fair}, is to force the solution to contain exactly $k_i$ examples (or representative) from each protected group $i$, where $k_i$s are input parameters. Although this {\em strict} center selection requirement seems to be a plausible fix for the unfairness issue, it may incur a significant loss in the quality of the output solution! See Figure~\ref{fig:motivation}.      

\begin{figure}[!h]
    \centering
    \includegraphics[scale=0.9]{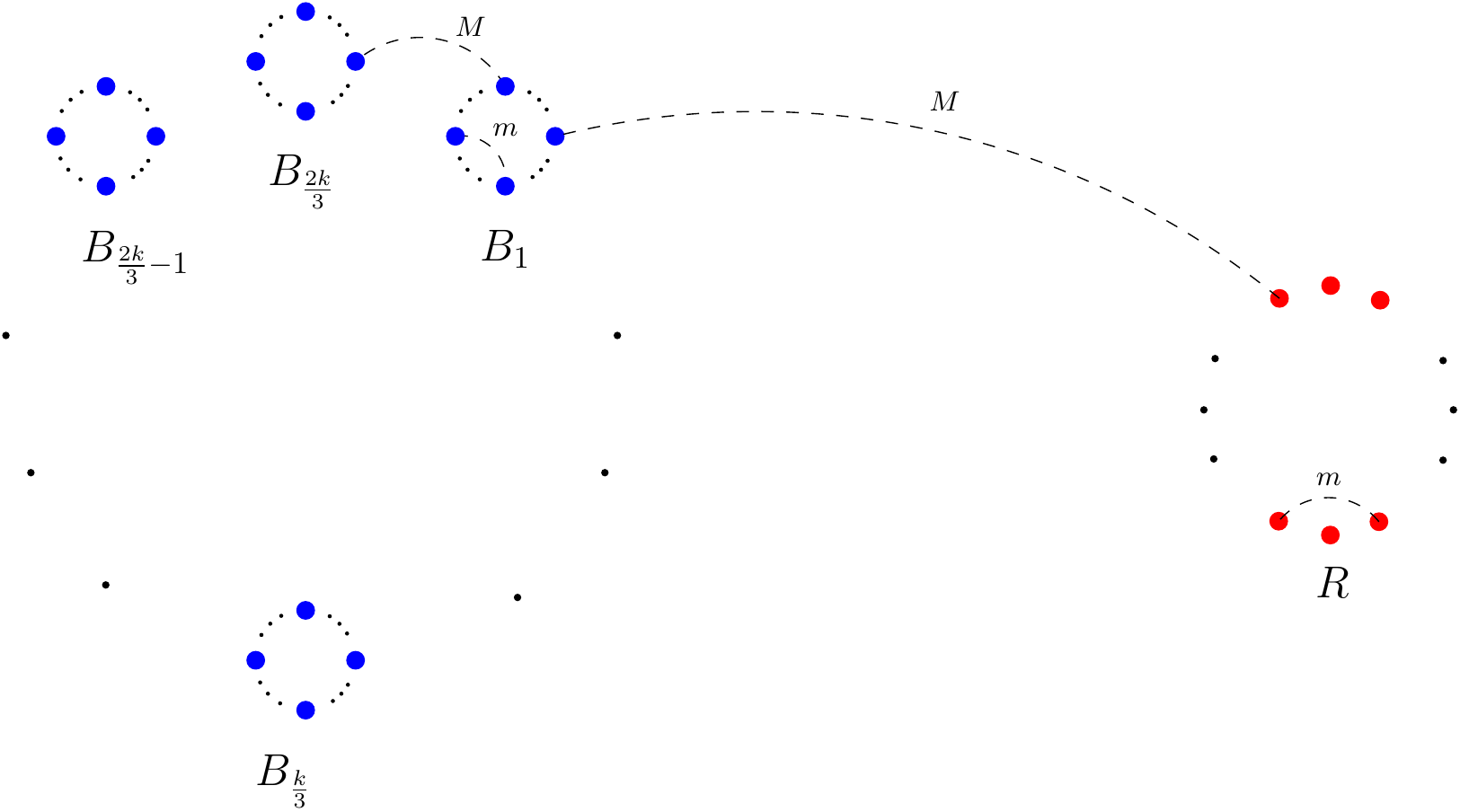}
    \caption{In this example, $\frac{n}{2}$ points with pairwise distance $m$ belong to the red group on the right side of the figure. The other $\frac{n}{2}$ points belong to the blue group on the left side, partitioned into groups $B_1, \cdots, B_{\frac{2k}{3}}$ each of size $\frac{3n}{4k}$. For every $i\neq j\in [\frac{2k}{3}]$, the pairwise distance of $v,u \in B_i$ is $m$ and the pairwise distance of $v\in B_i, u\in B_j$ is $M \gg m$. Moreover, the pairwise distance of any red and blue point is $M$. Note that the described distance function is metric. In the ``strict'' fair $k$-center, where $\frac{k}{2}$ centers should be picked from each of the red and blue group, the optimal solution has cost $M$. However, if we relax the requirement as in the fair range clustering and require $[\frac{k}{3}, \frac{2k}{3}]$ centers from each group, the $k$-center cost significantly reduces from $M$ to $m$ (the solution corresponds to picking one center from every $B_i$, for $i\in [\frac{2k}{3}]$ and arbitrary $\frac{k}{3}$ centers from $R$). While the latter solution is admissibly fair too, its quality is significantly better.}     
    \label{fig:motivation}
\end{figure}

In this paper, we study a relaxed requirement, called {\em fair range}, which only requires the number of selected centers from each protected group to be in a given interval specified by a lower bound and an upper bound. As we can easily tolerate slight deviations from the ``expected" presence of each protected group, fair range is a more natural requirement and has a better alignment with practice, compared to the strict requirement. 
On the other hand, the larger the requirement interval becomes, the better the quality of the clustering becomes.
While fair clustering with strict (i.e., exactly $k_i$ from each group $i$) requirement---{\em which itself is a special case of clustering under partition matroid constraints}---is a well-studied problem in the domain of fair clustering ~\citep{hajiaghayi2010budgeted,krishnaswamy2011matroid,charikar2012dependent,swamy2016improved,chen2016matroid,krishnaswamy2018constant,jones2020fair,chiplunkar2020solve}, this natural generalization of the problem has not been studied in the context of fair clustering. We remark that, very recently,~\citet{nguyen2022fair} studied fair range $k$-center. They studied the problem both in the standard offline and the streaming models and provided constant factor approximations in both regimes. Moreover,~\citet{thejaswi2021diversity} studied the problem when we are provided with lower bounds only and the objective is $k$-median cost. They showed fixed parameter algorithms for this problem which is referred to as {\em diversity-aware $k$-median}. 
However, the approximability of the general problem of fair range clustering with the $\ell_p$-objective which includes the standard $k$-median and $k$-means still remains open.



\paragraph{Our contributions.}
In this paper, we study the fair range clustering with the $\ell_p$-objective and provide a constant factor approximation algorithm for all values of $p\in [1,\infty)$. More precisely, our main result is as follows. 
\begin{theorem}\label{thm:main}
For all $p\in [1,\infty)$, there exists a constant factor approximation algorithm for fair range $k$-clustering with the $\ell_p$-objective that runs in polynomial time. 
\end{theorem}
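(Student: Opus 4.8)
The plan is to follow the LP-rounding paradigm for clustering, adapted so as to enforce the lower and upper bounds of the fair range constraints simultaneously. First I would write the natural relaxation \cllp: introduce a facility-opening variable $y_c\in[0,1]$ for every candidate center $c\in P$ and an assignment variable $x_{vc}\in[0,1]$ for every point $v$, and minimize $\sum_{v,c} d(v,c)^p\,x_{vc}$ subject to $\sum_c x_{vc}=1$, $x_{vc}\le y_c$, the cardinality equation $\sum_c y_c=k$, and the fair range constraints $\alpha_i\le \sum_{c\in P_i} y_c\le \beta_i$ for each group $i$. Solving this LP in polynomial time yields a fractional cost at most $\opt^p$. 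The observation that organizes the whole argument is that the \emph{upper} bounds together with the cardinality constraint describe a matroid on the candidate centers --- the truncation to rank $k$ of the partition matroid with caps $\beta_i$ --- which I will call the facility matroid \FacMat. The genuinely new difficulty is the family of \emph{lower} bounds $\sum_{c\in P_i} y_c\ge \alpha_i$: these are covering constraints, the feasible region is not downward closed, and hence the off-the-shelf matroid-median / matroid-$\ell_p$ rounding machinery does not apply directly.

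Next I would strip the fractional solution down using the standard clustering toolkit, first to a simplified relaxation \scllp (sparsifying the support and discretizing distances) and then to a \emph{structured} instance \stllp. Via filtering and bundling one groups the fractionally opened centers into bundles, each carrying total opening mass essentially $1$ and small fractional radius, and assigns every point to a nearby bundle; this is the step I expect \cllc\ (center partitioning) to perform. The purpose of this reduction is that, once the bundles are fixed, opening \emph{any} single representative center inside each bundle changes the $\ell_p$ cost by at most a constant factor, via the approximate triangle inequality $d(u,w)^p\le 2^{p-1}\bigl(d(u,v)^p+d(v,w)^p\bigr)$ and a standard charging of each point to the representative of its bundle. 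Thus the remaining task becomes purely combinatorial: choose one representative center per bundle so that exactly $k$ centers are opened and the induced per-group counts land inside every interval $[\alpha_i,\beta_i]$.

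The heart of the proof, which I expect to be the main obstacle, is this representative-selection step, precisely because it must respect the lower bounds (a covering, non-matroidal requirement), the upper bounds and cardinality (the matroid \FacMat), and the cost bound all at once. My plan is to cast it as a bipartite transportation problem, the role I anticipate for \clla\ (the facility-matroid algorithm): bundles on one side, groups on the other, an edge from bundle $B$ to group $i$ whenever $B$ contains a candidate center of group $i$, unit supply at each bundle, and lower/upper capacities $\alpha_i$ and $\beta_i$ on the total flow entering group $i$. The fractional bundle-to-group assignment inherited from the $y$ variables is a feasible point of this polytope, and since its constraint matrix is a network (interval) matrix and hence totally unimodular, an integral feasible point exists and is computable in polynomial time. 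An integral point selects exactly one group per bundle while guaranteeing $\alpha_i\le(\text{count of group }i)\le\beta_i$ and $\sum_i(\text{count})=k$; opening any center of the designated group inside each bundle then produces a feasible set $C$ of exactly $k$ centers obeying all fair range constraints, at a cost within a constant factor of the structured solution.

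Finally I would assemble the guarantees. The LP cost is at most $\opt^p$; the simplification/structuring steps and the representative substitution each inflate the $p$-th-power cost by a constant factor, while feasibility of the rounded solution is exact by construction. Taking $p$-th roots preserves a constant ratio, since $\bigl(c\cdot\opt^p\bigr)^{1/p}=c^{1/p}\,\opt\le c\,\opt$ for $c\ge 1$, so the algorithm is a constant-factor approximation for every fixed $p\in[1,\infty)$, with the constant depending on $p$ only through the triangle-inequality factor $2^{p-1}$ absorbed into the analysis. All components --- solving \cllp, the filtering/bundling \cllc, and the transportation rounding \clla --- run in polynomial time, giving the claimed result. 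The two subtle points to verify carefully are (i) that bundling can be arranged so that exactly $k$ bundles arise with each point charged to within a constant, and (ii) that the unit bundle supplies never conflict with the group capacities, i.e. feasibility of the transportation polytope, which should follow from feasibility of the fractional LP together with each bundle carrying mass essentially $1$.
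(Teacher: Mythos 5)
Your overall architecture --- LP relaxation, sparsification into well-separated bundles, and an integral selection step via total unimodularity of a flow/transportation polytope --- matches the paper's, and you correctly isolate the lower bounds $\alpha_i$ as the obstacle that breaks the matroid machinery. But there is a genuine gap at the step you treat as routine: the claim that opening \emph{any} single representative center inside each bundle changes the $\ell_p$ cost by at most a constant factor. For a bundle built directly from a fractional solution this is false: a facility carrying mass $\eps$ of a location's assignment can sit at distance $(\sR(v)^p/\eps)^{1/p}$ from $v$, so an adversarially chosen representative can be unboundedly more expensive than the fractional cost. The charging only works if every facility in a bundle carries a constant fraction of the bundle's mass. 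The paper manufactures exactly this property by a step absent from your proposal: it writes a second relaxation $\stllp$ purely in the $y$-variables (with constraints $\sum_{u\in\sB(v)}y_u\ge 1/2$ and $\sum_{u\in\sP(v)}y_u\le 1$ over \emph{disjoint} super balls), proves via the Ghouila--Houri criterion (Theorem~\ref{thm:Ghouila-Houri}) that its constraint matrix is totally unimodular, and hence extracts a \emph{half-integral} optimum. Only then does each bundle collapse to at most two facilities of mass $1/2$ each, which is what makes ``any representative is fine'' true (Lemma~\ref{lem:approximation-bound}). Your single application of TU to the final transportation polytope does not substitute for this; the paper needs integrality twice, once for half-integrality of $\stllp$ and once (via max-flow integrality) for the selection.

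The second gap is the feasibility of your transportation polytope, which you flag but do not resolve. The number of bundles $L$ can be strictly smaller than $k$, while the lower bounds may force up to $\sum_i\alpha_i$ facilities to be opened; with unit supply at each of only $L$ bundles your polytope has total supply $L<k$ and cannot meet the demands. Moreover the fractional witness you propose ($z_{B,i}=\sum_{c\in B\cap P_i}y_c$) only accounts for the $y$-mass inside bundles, whereas the constraint $\sum_{c\in P_i}y_c\ge\alpha_i$ is certified partly by facilities lying outside every bundle. The paper closes both holes with a dummy node $\bar S$ of capacity $k-L$ connected to \emph{all} facilities, through which the residual $y$-mass is routed so that each facility vertex receives exactly $y_u$ units of flow; feasibility of the fractional flow then follows from feasibility of $y$ for $\stllp$, and integrality of max-flow finishes the argument (Lemma~\ref{lem:max-flow}). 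Without these two repairs --- half-integrality of $\stllp$ and the dummy-node flow construction --- your proof does not go through as written.
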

More generally, our algorithm works for fair range facility location with zero opening cost facilities which generalizes fair range clustering.

As fair range $k$-clustering with $\ell_p$-objective has $k$-center as its special case ($k$-center is equivalent to fair range $k$-center where for every $i\in [\ell]$, $\alpha_i = 0$ and $\beta_i = k$), it is NP-hard to approximate it within a factor better than $2$~\citep{gonzalez1985clustering}. So, our approximation factor for the problem is tight up to a constant factor.

To design our algorithm, we start with the framework of~\citep{charikar2002constant,krishnaswamy2011matroid} to sparsify the input instance and find a feasible fractional solution of the standard LP-relaxation of the problem, called $\cllp$, on the sparse instance. However, the main difficulty is within the rounding part. 
In our algorithm, we further exploit the combinatorial structures of an approximately optimal fractional solution of $\cllp$. We consider another relaxation of the problem, $\stllp$, whose optimal solution is within an $e^{O(p)}$ factor of the optimal solution of $\cllp$.\footnote{In both relaxations, the objective is $\ell_p$-objective raised to power of $p$. So, with respect to the original objective these relaxation are within a constant factor of each other.} Then, using the techniques from Polyhedral Combinatorics, we show that $\stllp$ has a half-integral optimal solution. Next, by a reduction to the network flow problem, we design a combinatorial algorithm that outputs an $e^{O(p)}$-approximation integral solution of $\stllp$, relying on the properties of a half-integral optimal solution of the LP. Finally, we can convert this feasible integral solution of $\stllp$ to a feasible integral solution of $\cllp$ without losing more than a constant factor in the approximation ratio.

Refer to Section~\ref{sec:decription} for an overview of our algorithm and a summary of the first part of our algorithm which is standard and also used in some of prior works on (fair) clustering. Then, in Section~\ref{sec:rounding}, we describe our efficient rounding approach for fair range clustering.   

\paragraph{Other related work.}
Clustering has been an active area of research in the domain of fairness for algorithms and machine learning. In particular, it has been studied under various settings including group fairness notions such as {\em fair representation}~\citep{chierichetti2017fair,bera2019fair,bercea2019cost,backurs2019scalable,ahmadian2019clustering,dai2022fair}, {\em social fairness}~\citep{abbasi2020fair,ghadiri2020fair,makarychev2021approximation,chlamtavc2022approximating,ghadiri2022constant}, {\em proportional fairness}~\citep{chen2019proportionally,micha2020proportionally} and {\em individual fairness}~\citep{jung2019center,mahabadi2020individual,negahbani2021better,vakilian2022improved,ahmadi2022individual,brubach2020pairwise,brubach2021fairness}.

A similar notion has been studied for the related problem of nearest neighbor search~\citep{har2019near,aumuller2020fair,aumuller2021sampling} and submodular maximization~\citep{el2020fairness}.

\begin{remark}\label{rem:matroid-clustering}
    While the collection of solutions satisfying the given range constraints does not constitute a matroid (since these constraints do not satisfy the downward-closed property),~\citep{el2020fairness} showed that the family of all ``extendable'' subsets is indeed a matroid.\footnote{The work of~\citep{el2020fairness} used this idea to study the fairness of an objective different from clustering, namely submodular maximization.} In our context, a set of facilities $\Tilde{F}$ is extendable if it is a subset of facilities $\Bar{F}$ where $\Bar{F}$ has size at most $k$ and satisfies all given range constraints. More precisely, $\Tilde{F}$ is extendable if and only if for all $i\in [\ell]$, $|\Tilde{F}\cap P_i|\le \beta_i$ and $\sum_{i\in [\ell]} \max\{|\Tilde{F}\cap P_i|, \alpha_i\} \le k$. Then, one can use an existing algorithm for the $k$-clustering under matroid constraint (e.g.,~\citep{krishnaswamy2011matroid,swamy2016improved,krishnaswamy2018constant}) and find a constant factor approximation for fair range clustering. While the above algorithm only considers the $k$-median objective, it is known that this approach can be generalized to any value of $p \in [1,\infty)$, see~\citep{vakilian2022improved}.

    While working with the extendable sets reduces our problem to matroid $k$-clustering, the known algorithm will require solving LPs with exponentially many constraints. Although these LPs can be solved in polynomial time using the ellipsoid algorithm, the best known algorithm for solving such LPs, which is via cutting plane methods, runs in time $O(n^{3}\log(1/\epsilon))$ and returns a $(1+\epsilon)$-approximate solution~\citep{jiang2020improved}. However, in our approach, we work with small size LPs of the fair range clustering that can be solved in time $O(n^{1.5}k^{1.5})$ via the interior point method~\citep{van2021minimum}. So, arguably our algorithm is more efficient (in particular, for the case $k= O(1)$, we obtain a quadratic improvement in the run-time).
\end{remark}
\section{Description of Our Algorithm}\label{sec:decription}
For the sake of clarity, similarly to~\citep{swamy2016improved} on clustering under matroid constraints, we describe our algorithm for a more general problem of fair range facility location with the $\ell_p$-objective in which the number of open facilities are required to be (at most) $k$. 
For our application of clustering, it suffices to consider the instances of fair range facility locations in which the set of {\em facilities} $F$ is disjoint from the set of {\em clients}, where the clients are specified by a pair of {\em set of locations} $D$ and a {\em demand function} $w: D\rightarrow \mathbb{Z}_{> 0}$.Note that while $F$ and $D$ are disjoint, they may have points at the same location. 

More specifically, let $(P,d)$ be a metric space. In this problem, we are given a set of facilities $F:=F_1\uplus\cdots \uplus F_\ell$ where $F\subseteq P$, a set of integral range parameters $\{\alpha_i, \beta_i\}_{i\in [\ell]}$ where $\alpha_i,\beta_i\in \mathbb{Z}_{\geq 0}$, a set of locations $D\subseteq P$, and a demand function $w\colon D\rightarrow \mathbb{Z}_{> 0}$ denoting the {\em total} number of clients in each location of $D$.\footnote{The integrality of $\alpha_i, \beta_i$ are without loss of generality as we can always replace them with $\lceil{\alpha_i\rceil}$ and $\lfloor \beta_i \rfloor$ and the solution space remains the same.} The goal is to open a subset $C\subseteq F$ of $k$ facilities with minimum $\ell_p$-cost assignment of clients to their closest facilities (i.e., $(\sum_{v\in D} w(v)\cdot d(v,C)^p)^{1/p}$), such that for every group $i\in [\ell], |C\cap F_i| \in [\alpha_i, \beta_i]$.

\begin{remark}
In the description of our algorithm and in particular, in the objective of the LP-relaxations, we consider the $\ell_p$-clustering raised to the power of $p$. Throughout the paper, we provide all approximation factors according to $\sum_{v\in D} w(v)\cdot d(v,C)^p$ denoted as $\cost$. Then, at the end, to derive the result of the main theorem, we raise the approximation factor to $1/p$.
\end{remark}

Our algorithm has two major components: (1) finding an approximate fractional solution $(x,y)$ for an LP-relaxation of the problem with certain {\em well-separatedness} and {\em locality} properties, (2) rounding the fractional solution to an integral solution without losing more than a factor of $e^{O(p)}$ in the LP objective. The first part is essentially a standard approach in approximation algorithm for clustering, introduced in~\citep{charikar2002constant}, and is similar to the one used in~\citep{krishnaswamy2011matroid,swamy2016improved}. So, here we only describe the properties of the fractional solution at the end of the first part and for a detailed exposition, we refer to Appendix~\ref{sec:preprocessing}.  

The main technical contribution of our paper is an efficient rounding algorithm that preserves the clustering with $\ell_p$-objective (for all $p\in [1,\infty)$) up to a constant factor, does not violate any of fairness constraints, and opens at most $k$ centers.

\subsection{Reducing the number of locations}
Given a set of points $P$, we first run an existing efficient constant factor approximation algorithm for $k$-clustering with $\ell_p$ objective (ignoring all range constraints) to get a set of centers $C = (c_1, \cdots, c_k)$. Then, we construct the following instance of fair range clustering. We separate the set of clients ($D$) and facilities ($F$) as follows: Each point in $P$ is moved to its closest center in $C$ and the resulting set of points located at $k$ locations $c_1, \cdots, c_k$ constitutes the set of clients. In other words, the set of clients can be described in terms of $k$ locations plus the total number of clients in each location determined by $w' : D \rightarrow \mathbb{Z}_{> 0}$. For the facilities, we set $F = P$. Then, we solve the minimum cost fair-range clustering (or facility location w.r.t. $D$ and $F$) that picks $k$ centers (or facilities) from $F$ and serves all clients. 

\begin{theorem}\label{thm:location-reduction}
Given an $O(\alpha)$-approximation algorithm of $k$-clustering with $\ell_p$-objective, a $\beta$-approximate solution $S$ of fair range clustering with $\ell_p$-objective on the described instance ($D, F$) is an $O(\alpha\beta)$-approximation for fair range clustering with $\ell_p$-objective on the original instance $P$.   
\end{theorem}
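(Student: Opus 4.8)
The plan is to move costs back and forth between the original instance (clients at all of $P$) and the reduced instance (clients collapsed onto the $k$ locations $c_1,\dots,c_k$) using the relaxed triangle inequality for $p$-th powers, $(a+b)^p \le 2^{p-1}(a^p+b^p)$, charging the displacement incurred by the collapse against the clustering cost of $C$. For a facility set $T\subseteq P$ write $\cost_P(T)=\sum_{v\in P} d(v,T)^p$ for its $p$-th-power cost on the original clients and $\cost_D(T)=\sum_{v\in P} d(\nn(v),T)^p$ for its cost on the reduced clients, where $\nn(v)\in C$ is the center that $v$ was moved to. Since $d(v,T)\le d(v,\nn(v))+d(\nn(v),T)$ and symmetrically $d(\nn(v),T)\le d(\nn(v),v)+d(v,T)$, the relaxed triangle inequality gives the two-sided comparison $\cost_P(T)\le 2^{p-1}\big(\cost_P(C)+\cost_D(T)\big)$ and $\cost_D(T)\le 2^{p-1}\big(\cost_P(C)+\cost_P(T)\big)$, where $\cost_P(C)=\sum_{v\in P} d(v,\nn(v))^p$ is exactly the $\ell_p^p$ clustering cost of $C$ on $P$.

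Next I would control the displacement term $\cost_P(C)$. Let $O^\ast$ be an optimal fair-range solution on the original instance, with $\opt:=\cost_P(O^\ast)$. Because $O^\ast$ opens $k$ facilities, it is a feasible (unconstrained) $k$-clustering of $P$, so the $O(\alpha)$-approximation guarantee of $C$ yields $\cost_P(C)\le O(\alpha^p)\cdot\opt$; the factor appears as $\alpha^p$ since the guarantee is on the $\ell_p$-objective while we work with its $p$-th power.

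The remaining step chains the estimates. First, feasibility transfers for free: the range constraints and the facility ground set are identical in both instances ($F=P$), so any feasible solution of the reduced instance is feasible for the original one, and conversely $O^\ast$ is feasible for the reduced instance. Hence the optimal reduced cost is at most $\cost_D(O^\ast)\le 2^{p-1}\big(\cost_P(C)+\opt\big)=O(\alpha^p)\opt$. Applying the $\beta$-approximation guarantee of $S$ on the reduced instance gives $\cost_D(S)\le O(\beta^p)\cdot O(\alpha^p)\opt=O((\alpha\beta)^p)\opt$. Finally, converting back to the original clients, $\cost_P(S)\le 2^{p-1}\big(\cost_P(C)+\cost_D(S)\big)=O((\alpha\beta)^p)\opt$; taking $p$-th roots turns this into the claimed $O(\alpha\beta)$ ratio on the $\ell_p$-objective, and since $S$ is feasible for the original instance this proves the theorem.

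I expect the work to be careful bookkeeping rather than a conceptual obstacle: each hop between the two instances and the back-and-forth to $\cost_P(C)$ spends a factor of $2^{p-1}$ (a constant once the final $p$-th root is taken), and one must consistently keep in mind that the stated $\alpha$- and $\beta$-guarantees refer to the $\ell_p$-objective whereas all the triangle-inequality manipulations live in the $p$-th-power cost. The main pitfall is thus double counting or mis-scaling these factors, together with verifying that $O^\ast$ is simultaneously a legal unconstrained clustering (used to bound $\cost_P(C)$) and a legal reduced-instance solution (used to bound the reduced optimum).
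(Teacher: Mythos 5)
Your proposal is correct and follows essentially the same route as the paper's proof: bound the displacement term $\sum_v w(v)d(v,C)^p$ by $\alpha^p$ times the fair-range optimum (since that optimum is a feasible unconstrained $k$-clustering), use the relaxed triangle inequality $(a+b)^p\le 2^{p-1}(a^p+b^p)$ to show the original optimum is cheap on the reduced instance, apply the $\beta$-guarantee there, and convert back with one more application of the same inequality. The bookkeeping of the $2^{p-1}$ and $\alpha^p,\beta^p$ factors matches the paper's chain of inequalities.
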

\begin{proof}
As the set of facilities $F = P$, the set $S$ is a feasible solution for fair range clustering on the original instance $P$ too. 

Let $D = \{c_1, \cdots, c_k\}$ denote an $\alpha$-approximate solution of clustering with $\ell_p$-objective on $P$.
Let $\opt_{\mathrm{org}}$ and $\opt_{\mathrm{rdc}}$ respectively denote optimal solutions of fair range clustering with $\ell_p$-objective on the original instance $P$ and the reduced instance $(D, F)$. First we bound the cost of $\opt_{\mathrm{org}}$ on the reduced instance $(D,F)$:
\begin{align}
    \cost_{\mathrm{rdc}}(\opt_{\mathrm{org}}) 
    &= \sum_{v\in D} w'(v)\cdot d(v, \opt_{\mathrm{org}})^p \nonumber\\
    &\le \sum_{v\in P} w(v) \cdot 2^{p-1} \cdot \big(d(v, D)^p + d(v, \opt_{\mathrm{org}})^p\big) \; \rhd\text{approximate triangle inequality}\nonumber \\
    &\le 2^{p-1}\big(\sum_{v\in P} w(v) d(v,D)^p  + \sum_{v\in P} w(v) d(v,\opt_{\mathrm{org}})^p\big) \nonumber \\
    &\le 2^{p-1} \cdot (\alpha^p \cdot \cost(\opt_{\mathrm{org}}) + \cost(\opt_{\mathrm{org}})) \; \rhd\text{$D$ is an $\alpha$-approximate solution}\nonumber \\
    &= 2^{p-1}(\alpha^p +1) \cdot \cost(\opt_{\mathrm{org}}) \label{eq:org-opt-cost-on-rdc}
\end{align}
Next, we bound the cost of the solution $S$ on the original instance $P$,
\begin{align*}
    \cost(S)
    &= \sum_{v\in P} w(v) \cdot d(v, S)^p \\
    &= \sum_{v\in P} w(v) \cdot 2^{p-1} \cdot \big(d(v, D)^p + d(\nn_{D}(v), S)^p \big) &&\rhd\text{approximate triangle inequality}\\
    &\le 2^{p-1}\big(\sum_{v\in P} w(v)d(v,D)^p + \sum_{v\in D} w'(v)d(v,S)^p\big) \\
    &\le 2^{p-1}\cdot (\alpha^p \cdot \cost(\opt_{\mathrm{org}}) + \beta^p \cdot \cost_{\mathrm{rdc}}(\opt_{\mathrm{rdc}})) \\\
    &\le 2^{p-1} \cdot (\alpha^p + \beta^p \cdot 2^{p-1} (\alpha^p +1)) \cdot \cost(\opt_{\mathrm{org}}) &&\rhd\text{by Eq~\eqref{eq:org-opt-cost-on-rdc}}
\end{align*}
Thus, $S$ is an $O(\alpha\beta)$-approximation for fair range clustering with $\ell_p$-objective on the original instance $P$.
\end{proof} 

In the rest of this paper, at the expense of losing a factor of  $O(\alpha)$ in the approximation factor and running an $\alpha$-approximation algorithm of $k$-clustering with $\ell_p$-objective (with no range constraints), we assume that $|D| = k$ and $|F| = n$.  

\subsection{Constructing a structured fractional solution}
In this section, we describe the sparsification approach of~\cite{charikar2002constant} which outputs an instance with a subset of locations where pairwise distances of survived locations are ``relatively large''. 
First, we state a standard LP-relaxation of the problem as follows.
\begin{align}
&\cllp(D, F, w, \{\alpha_i, \beta_i\}_{i\in [\ell]})\nonumber\\[1mm]
\text{minimize }& \ \rlap{$\sum_{v\in D, u\in F} w(v) \cdot d(v, u)^p \cdot x_{vu}$} \nonumber\\[1mm]
\text{s.t.}\qquad &\sum_{u\in F} x_{vu} \geq 1 \quad\forall v\in D \label{cst:assgn} \\ 
&\alpha_i \leq \sum_{u\in F_i} y_{u} \leq \beta_i \quad\forall i\in [\ell]  \label{cst:fairness}\\
&\sum_{u\in F} y_u \leq k \label{cst:k-center}\\
&0 \leq x_{vu} \leq y_{u}  \quad\forall v\in D, u\in F \label{cst:under_assignment}
\end{align}
In our algorithm, we never modify the set $F$ and the fairness constraints $\{\alpha_i, \beta_i\}_{i\in [\ell]}$. So, to specify an instance, we will provide the set of clients $(D, w)$.
Consider an optimal fractional solution $(x^*,y^*)$ of $\cllp(D, w)$. 
For any location $v\in D$, define $\sR(v):= \big(\sum_{u\in F} x^*_{vu} \cdot d(v,u)^p\big)^{1/p}$ as the {\em fractional distance} of a unit of demand at location $v$ w.r.t. the optimal solution $(x^*,y^*)$. 
When $y^*$ is integral, then $\sR(v)$ is the distance of $v$ to its closest open facility, specified by the vector $y^*$.

The sparsification approach of~\cite{charikar2002constant} applied to our setting, described in Appendix~\ref{sec:preprocessing}, outputs a sparse instance with the following properties. The proofs of theorems in this sections are deferred to Appendix~\ref{sec:proof-sec-2}. 
\begin{theorem}\label{thm:sparsification-step}
    Given an instance $(D,w)$ of fair range clustering with $\ell_p$-cost and an optimal fractional solution $(x,y)$ of $\cllp(D, w)$ with cost $\opt_D$, there exists a polynomial time algorithm that returns a set of locations $D' \subseteq D$ and a demand function $w': D' \rightarrow \mathbb{R}$ such that
    \begin{enumerate}
        \item[\namedlabel{prop:separatedness}{(Q1)}] For every pair of $v_i, v_j$ in $D'$, $d(v_i, v_j) \ge 2^{1+1/p} \max\{\sR(v_i), \sR(v_j)\}$.
        \item[\namedlabel{prop:feasibility}{(Q2)}] $(x,y)$ is a feasible solution of $\cllp(D', w')$ of cost at most $\opt_D$,
        \item[\namedlabel{prop:cost-preservation}{(Q3)}] Any integral solution $C$ of $\cllp(D', w')$ of cost $z$, can be converted in polynomial time to a feasible solution of $\cllp(D, w)$ of cost at most $4^p \cdot \opt_D + 2^{p-1} \cdot z$. 
    \end{enumerate}
\end{theorem}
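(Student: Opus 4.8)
The plan is to use the greedy filtering procedure of~\cite{charikar2002constant}, adapted so that the separation threshold is measured in the $\ell_p$ radius $\sR(\cdot)$. First I would sort the locations of $D$ in nondecreasing order of $\sR(v)$ and process them one at a time, maintaining a set $D'$ of \emph{representatives} (initially empty) together with a pointer $r(\cdot)$ from merged locations to their representative. When processing $v$, if there already exists some $v'\in D'$ with $d(v,v') < 2^{1+1/p}\,\sR(v)$, I would not add $v$ to $D'$ but instead set $r(v)=v'$ (choosing any such $v'$) and transfer its demand; otherwise I would add $v$ to $D'$ and set $r(v)=v$. The new demand function is $w'(v') := \sum_{v\,:\,r(v)=v'} w(v)$ for $v'\in D'$, so the total demand is preserved. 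This clearly runs in polynomial time.

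For property~\ref{prop:separatedness}, I would observe that since locations are processed in nondecreasing $\sR$ order, whenever $v$ is added to $D'$ every earlier representative $v'$ satisfies $\sR(v')\le \sR(v)$ and $d(v,v')\ge 2^{1+1/p}\,\sR(v)=2^{1+1/p}\max\{\sR(v),\sR(v')\}$ (otherwise $v$ would have been merged), which gives the claimed separation for every pair in $D'$. Property~\ref{prop:feasibility} is almost immediate: the fairness constraints~\eqref{cst:fairness}, the cardinality constraint~\eqref{cst:k-center}, and the under-assignment constraints~\eqref{cst:under_assignment} involve only $y$ (or hold pointwise for every $v\in D\supseteq D'$), so $(x,y)$ stays feasible on $\cllp(D',w')$. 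For the cost, I would use that $\sum_{u\in F} d(v',u)^p x_{v'u}=\sR(v')^p$ and that each $v$ merged into $v'$ was processed later, hence $\sR(v')\le\sR(v)$; therefore $w'(v')\sR(v')^p = \sum_{r(v)=v'} w(v)\sR(v')^p \le \sum_{r(v)=v'} w(v)\sR(v)^p$, and summing over $v'\in D'$ bounds the cost by $\sum_{v\in D} w(v)\sR(v)^p = \opt_D$.

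For property~\ref{prop:cost-preservation}, given an integral solution $C$ of $\cllp(D',w')$ with cost $z=\sum_{v'\in D'} w'(v')\,d(v',C)^p$, I would note that $C$ already satisfies all constraints of $\cllp(D,w)$ (same $F$ and same fairness/cardinality requirements), so only its cost on $D$ must be controlled. Splitting $D=D'\sqcup(D\setminus D')$ and applying the approximate triangle inequality $d(v,C)^p\le 2^{p-1}\big(d(v,r(v))^p+d(r(v),C)^p\big)$ to each merged $v$, I would bound the ``detour'' term using the merge rule $d(v,r(v))^p < 2^{p+1}\sR(v)^p$, so that $2^{p-1}\sum_{v\notin D'} w(v)\,d(v,r(v))^p < 2^{2p}\sum_{v\notin D'} w(v)\sR(v)^p \le 4^p\,\opt_D$. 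The remaining terms, grouped by representative, contribute $\sum_{v'\in D'} d(v',C)^p\big[w(v') + 2^{p-1}(w'(v')-w(v'))\big]$, and here the key step is to absorb the direct term using $2^{p-1}\ge 1$ (valid for $p\ge 1$), giving $\le 2^{p-1}\sum_{v'\in D'} w'(v')\,d(v',C)^p = 2^{p-1}z$. Combining yields $\cost_D(C)\le 4^p\,\opt_D + 2^{p-1}z$, as required.

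The main obstacle I anticipate is purely in the bookkeeping of the constants in~\ref{prop:cost-preservation}: one must track that the merging threshold $2^{1+1/p}$ raises to exactly $2^{p+1}$ under the $p$-th power and then combines with the $2^{p-1}$ from the approximate triangle inequality to produce precisely $4^p$, and one must exploit $2^{p-1}\ge 1$ to fold the $w(v')$ term into $2^{p-1}w'(v')$ so that the final $z$-coefficient is $2^{p-1}$ rather than $1+2^{p-1}$. The separatedness and feasibility claims are comparatively routine once the nondecreasing $\sR$ processing order is fixed.
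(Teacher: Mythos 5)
Your proposal is correct and follows essentially the same route as the paper: the same $\sR$-sorted greedy consolidation with threshold $2^{1+1/p}\sR(v)$, the same monotonicity argument $\sR(r(v))\le\sR(v)$ for (Q1) and (Q2), and the same approximate-triangle-inequality accounting $2^{p-1}\cdot 2^{p+1}=4^p$ for (Q3). The only cosmetic difference is that you split $D$ into $D'$ and $D\setminus D'$ and fold the direct term in via $2^{p-1}\ge 1$, whereas the paper applies the triangle inequality uniformly to all of $D$ (with $v'=v$ for survivors); both yield $4^p\,\opt_D+2^{p-1}z$.
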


Next, for every location $v\in D'$, we define the {\em ball} $\sB(v):= \{u\in F | d(v, u) \le 2^{\frac{1}{p}}\cdot\sR(v)\}$ to denote the set of facilities at distance at most $2^{\frac{1}{p}}\cdot\sR(v)$ from $v$. Further, for every location $v\in D'$, we define $\sP(v)$ as the super ball of $v$ which consists of $\sB(v)$ and a set of ``private facilities'' of $v$. 

\begin{observation}\label{obs:sparse-dist-rel}
For any pair of $v, v'\in D'$ and $u'\in \sR(v')$, $\frac{1}{2} d(v,v') \le d(v, u') \le \frac{3}{2} d(v,v')$. 
\end{observation}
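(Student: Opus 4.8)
The plan is to derive both inequalities from the separatedness guarantee~\ref{prop:separatedness} together with a single application of the triangle inequality. The statement should be read with $u'\in\sB(v')$, the ball of $v'$ (the symbol $\sR(v')$ is a scalar, so the intended membership is in $\sB(v')$); by the definition of $\sB(v')$ this means $d(v',u')\le 2^{1/p}\cdot\sR(v')$. The whole content of the observation is that, because the surviving locations are well separated relative to their fractional radii, any facility lying in the ball of $v'$ is much closer to $v'$ than $v'$ is to $v$, and hence the distances from $v$ to $v'$ and from $v$ to $u'$ agree up to a factor of $2$.

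First I would record the quantitative separation. Property~\ref{prop:separatedness} gives $d(v,v')\ge 2^{1+1/p}\max\{\sR(v),\sR(v')\}\ge 2\cdot 2^{1/p}\cdot\sR(v')$, so that $d(v',u')\le 2^{1/p}\sR(v')\le\tfrac{1}{2}d(v,v')$. This single inequality is the only place where the structure of $D'$ is used; everything else is metric bookkeeping.

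The upper bound then follows from $d(v,u')\le d(v,v')+d(v',u')\le d(v,v')+\tfrac{1}{2}d(v,v')=\tfrac{3}{2}d(v,v')$, and the lower bound from the reverse triangle inequality $d(v,u')\ge d(v,v')-d(v',u')\ge d(v,v')-\tfrac{1}{2}d(v,v')=\tfrac{1}{2}d(v,v')$. There is no real obstacle here: once the estimate $d(v',u')\le\tfrac{1}{2}d(v,v')$ is in hand, both bounds are immediate. The only point I would be careful about is that the same argument applies verbatim if the membership is instead $u'\in\sP(v')$, as long as the private facilities making up the super ball are placed within the same radius $2^{1/p}\sR(v')$ of $v'$; otherwise the constant $2^{1/p}$ in the separation bound would have to be adjusted accordingly.
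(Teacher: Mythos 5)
Your proof is correct and matches the paper's argument exactly: both bound $d(v',u')\le 2^{1/p}\sR(v')\le\tfrac{1}{2}d(v,v')$ using the ball definition together with property~\ref{prop:separatedness}, then obtain the two inequalities by the triangle inequality and its reverse. Your reading of $u'\in\sR(v')$ as $u'\in\sB(v')$ is also the intended one, as the paper's own proof confirms.
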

\begin{proof}
Since $d$ is a metric distance, by the triangle inequality, $d(v,u') + d(u', v') \ge d(v, v')$.
Next, we bound $d(u', v')$ in terms of $d(v,v')$. Since $u'\in \sB(v')$, $d(v', u') \le 2^{\frac{1}{p}} \sR(v')$. On the other hand, by~\ref{prop:separatedness}, $d(v,v') \ge 2^{1+\frac{1}{p}}\sR(v')$. Hence, $d(v', u') \le \frac{1}{2} d(v,v')$. 
By another application of the triangle inequality, $d(v, u') \le d(v, v') + d(v', u') \le \frac{3}{2} d(v,v')$.
\end{proof}

Next, in polynomial time, we convert a fractional solution of $\cllp(D,w)$ to a fractional solution $(x, y)$ of $\cllp(D',w')$ with the following further structural properties.  

\begin{theorem}\label{thm:structured-fractional}
    There exists a polynomial time algorithm that outputs a fractional solution $(x,y)$ of $\cllp(D', w')$ of cost $9^{p} \cdot \opt_D$, where $\opt_D$ is the cost of an optimal solution of $\cllp(D, w)$, and a collection of super balls $\{\sP(v)\}_{v\in D'}$ that satisfy the following properties:
    \begin{enumerate}
        \item[\namedlabel{prop:ball-inclusion}{(P1)}] For every $v\in D'$, $\sB(v) \subseteq \sP(v)$,
        \item[\namedlabel{prop:optimality}{(P2)}] For every $v\in D'$ and $u\in \sP(v)\setminus \sB(v)$, $x_{vu}>0$ only if $\sum_{u\in \sB(v)}y_{u} <1$. Similarly, for every $v\in D'$ and $u\in F\setminus \sP(v)$, $x_{vu}>0$ only if $\sum_{u\in \sP(v)} y_{v} < 1$.    
        \item[\namedlabel{prop:external-center}{(P3)}] For every $v\in D'$, if $x_{vu} > 0$, then either $u\in \sP(v)$ or $u\in \sB(v')$ where $v' = \nn_{D'}(v)$ denotes the nearest location in $D'$ (other than $v$ itself) to $v$, 
        \item[\namedlabel{prop:half-center}{(P4)}] For every $v\in D'$, \[\sum_{u\in \sP(v)} x_{vu} \ge \sum_{u\in \sB(v)} x_{vu} \ge 1/2.\]
        \item[\namedlabel{prop:near-center-further-than-private-center}{(P5)}] For every $v\in D'$, $u\in \sP(v)\setminus \sB(v)$, $d(v,u) \le 2\cdot d(v,v')$, where $v' = \nn_{D'}(v)$.
        \item[\namedlabel{prop:disjoint-superballs}{(P2)}] The set of super balls, $\{\sP(v)\}_{v\in D'}$, are disjoint.
    \end{enumerate}
\end{theorem}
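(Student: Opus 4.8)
The plan is to keep the facility vector $y$ from the feasible solution of $\cllp(D',w')$ guaranteed by~\ref{prop:feasibility} completely unchanged (so that the fairness and cardinality constraints~\eqref{cst:fairness}--\eqref{cst:k-center} stay satisfied for free), and to obtain the structured solution purely by (i) defining the super balls and (ii) re-routing the assignment vector $x$. First I would record the ``half in the ball'' estimate: since $\sR(v)^p=\sum_u x_{vu}d(v,u)^p$ and every facility $u\notin\sB(v)$ has $d(v,u)^p>2\,\sR(v)^p$, a Markov-type averaging forces $\sum_{u\in\sB(v)}x_{vu}\ge 1/2$, and hence $\sum_{u\in\sB(v)}y_u\ge 1/2$ because $x_{vu}\le y_u$. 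This already gives~\ref{prop:half-center} and, more importantly, guarantees that after filling $\sB(v)$ the residual demand of each location is at most $1/2$.

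Next I would define the super balls. For $v\in D'$ let $v'=\nn_{D'}(v)$ and set $\sP(v):=\sB(v)\cup\{u\notin\bigcup_{w\in D'}\sB(w): x_{vu}>0,\ d(v,u)<\tfrac12 d(v,v')\}$; the second set is the collection of ``private'' facilities of $v$. Properties~\ref{prop:ball-inclusion} and~\ref{prop:near-center-further-than-private-center} are then immediate (as $\tfrac12 d(v,v')\le 2d(v,v')$). The key geometric lemma I would prove is that \emph{any} facility $u$ with $d(v,u)<\tfrac12 d(v,v')$ lies in no ball $\sB(w)$ with $w\neq v$: if $u\in\sB(w)$ then $d(u,w)\le 2^{1/p}\sR(w)$, and combining this with~\ref{prop:separatedness} applied to the pair $v,w$ (which gives $2^{1/p}\sR(w)\le\tfrac12 d(v,w)$) and the triangle inequality yields $d(v,w)<d(v,v')$, contradicting that $v'$ is the nearest neighbor of $v$. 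This lemma delivers the disjointness of $\{\sP(v)\}$ for me: the balls $\sB(\cdot)$ are pairwise disjoint by~\ref{prop:separatedness} (a facility on a common boundary is assigned to the nearer center), the private facilities lie outside every ball by construction, and two private sets are separated because every facility within $\tfrac12 d(v,v')$ of $v$ is strictly closer to $v$ than to any other location of $D'$, i.e. it sits in the Voronoi cell of $v$.

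Then I would re-route $x$ by a cheapest-first water-filling that respects $x_{vu}\le y_u$: saturate $\sB(v)$ first, then the private facilities of $\sP(v)$, and only send the leftover to $\sB(v')$. By Observation~\ref{obs:sparse-dist-rel} a facility of $\sB(v')$ is at distance at most $\tfrac32 d(v,v')$ from $v$, so this is a legal assignment, and since $\sum_{u\in\sB(v')}y_u\ge 1/2$ (Markov for $v'$) while the leftover is at most $1/2$, there is always enough room. Filling in this order gives the two saturation statements of~\ref{prop:optimality} and the locality statement~\ref{prop:external-center} directly.

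The quantitative heart, and the step I expect to be the main obstacle, is the cost bound $9^p\,\opt_D$. The contribution of $\sB(v)$ costs at most $2\,\sR(v)^p$ per unit, hence at most $2\,w'(v)\sR(v)^p$. For the remaining (``far'') demand, of total mass $\rho_v\le 1/2$, I would charge against what the original solution already paid outside $\sB(v)$, splitting it at radius $\tfrac12 d(v,v')$: the mass the original solution routed into the private region is re-packed onto the cheapest private capacity, which by the cheapest-first rule costs no more than the original, while the mass it routed beyond radius $\tfrac12 d(v,v')$ is sent to $\sB(v')$ at distance $\le\tfrac32 d(v,v')$, an increase of at most a factor $3^p$. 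The delicate point is that filling the private facilities to capacity \emph{moves} demand onto them, and one must argue---using that private facilities are genuinely cheaper than $\sB(v')$ and that the original far mass dominates the overflow $\rho_v-\sum_{u\in\mathrm{priv}}y_u$---that this repacking never costs more than $(1+3^p)$ times the original outside-$\sB(v)$ cost. Summing over $v$ then gives a total of at most $(3+3^p)\opt_D\le 9^p\opt_D$, the last inequality holding for all $p\ge 1$. The whole difficulty is that disjointness, saturation, and the cost bound pull against one another; what reconciles them is precisely the geometric lemma above, which simultaneously makes the private sets automatically disjoint and guarantees that no \emph{cheap} (close) demand is ever forced onto the far ball $\sB(v')$.
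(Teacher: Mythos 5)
Your proposal is correct in its essentials, but it reaches the theorem by a genuinely different construction of the super balls than the paper uses. The paper defines the private facilities of $v$ as the open facilities outside all balls for which $v$ is the \emph{nearest location they fractionally serve}; it enforces this by an explicit reassignment pass (each such facility keeps only its closest client location, whose siblings are rerouted into that location's ball at a $3^p$ loss via the bound $d(v_j,u')\le 3d(v_j,u)$), derives disjointness of $\{\sP(v)\}$ from the resulting ``one owner per outside facility'' property, enforces~\ref{prop:near-center-further-than-private-center} a posteriori by closing private facilities farther than $2d(v,v')$, and pays a second $3^p$ in the subsequent rerouting, for $3^p\cdot 3^p=9^p$ total. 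You instead define $\sP(v)\setminus\sB(v)$ by the distance threshold $d(v,u)<\tfrac12 d(v,v')$, which makes disjointness a one-line triangle-inequality/Voronoi argument (two private sets cannot share $u$ since $d(v,u)+d(z,u)<\tfrac12 d(v,z)+\tfrac12 d(z,v)$), makes~\ref{prop:near-center-further-than-private-center} automatic with a stronger constant, and replaces the two successive $3^p$ losses by a single charging argument giving $(3+3^p)\le 9^p$. Your cost accounting is only sketched, but the sketch isolates and correctly resolves the one delicate point: the extra mass that water-filling pushes onto private capacity is at most the mass the original solution sent to distance $\ge\tfrac12 d(v,v')$ (every facility outside $\sP(v)$, whether in another ball or unclaimed, is at least that far), and each such unit lands at distance $<\tfrac12 d(v,v')$ (private) or $\le\tfrac32 d(v,v')$ (in $\sB(v')$), so the charge is a factor $1$ resp.\ $3^p$ against the original far cost; combined with the $2\sR(v)^p$ per-unit bound inside $\sB(v)$ this closes the argument. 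A minor side benefit of your route is that $y$ is never modified, whereas the paper's~\ref{prop:near-center-further-than-private-center} cleanup sets some $\bar y_u=0$ and therefore has to be checked against the lower-bound fairness constraints.
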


\section{Rounding Algorithm}\label{sec:rounding}
To do the rounding, first we write a new LP relaxation, called $\stllp$ 
which is a simplification of $\cllp$ via the further structures of fractional solution $(x,y)$ guaranteed by Theorem~\ref{thm:structured-fractional}. In particular, $\stllp$ is useful for our rounding algorithm because as we show in this section, the polyhedraon constructed by the constraints of $\stllp$ is half-integral. A solution $y$ to an LP relaxation is half-integral if every coordinates in $y$ has value either $0, 1/2$ or $1$.

In $\stllp$, we define $\Delta(v) := d(v, v')^p + \sum_{u\in \sP(v)} \big(d(v, u)^p - d(v, v')^p\big) \cdot y_u$ to denote the minimum (fractional) distance of $v$ to open facilities (i.e., the facilities vector $y$).

\begin{align}
    &\stllp(D', F, w', \{\alpha_i, \beta_i\}_{i\in [\ell]})\nonumber\\[1mm]
    \text{minimize }& \rlap{$\sum_{v\in D'} w'(v) \cdot \Delta(v)$} \nonumber\\[1mm]
    \text{s.t.}\qquad &\alpha_i \leq \sum_{u\in F_i} y_{u} \leq \beta_i &&\forall i\in [\ell]  \label{cst:str-fairness}\\
    &\sum_{u\in F} y_u \leq k \label{cst:str-k-center}\\
    &\sum_{u\in \sB(v)} y_{u} \ge 1/2  &&\forall v\in D' \label{cst:str-half-assignment} \\
    &\sum_{u\in \sP(v)} y_u \le 1 &&\forall v\in D' \label{cst:str-full-assignment} \\
    &y_u \ge 0 &&\forall u\in F
\end{align}
\begin{lemma}\label{lem:str-lp-opt-bound}
    The optimal fractional solution of $\stllp(D', w')$ is a valid solution for fair range clustering on $(D', w')$ and has cost at most $e^{O(p)} \cdot \opt_D$.
\end{lemma}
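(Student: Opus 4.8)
The plan is to prove the two claims of Lemma~\ref{lem:str-lp-opt-bound} separately. For the first claim, I would show that the structured fractional solution $(x,y)$ guaranteed by Theorem~\ref{thm:structured-fractional} is itself feasible for $\stllp(D', w')$, so that the optimum of $\stllp$ is well-defined and its optimal $y$ (which only needs to satisfy the fairness and cardinality constraints) immediately constitutes a valid fractional solution for fair range clustering on $(D', w')$. Concretely, the fairness constraints~\eqref{cst:str-fairness} and the cardinality constraint~\eqref{cst:str-k-center} of $\stllp$ are identical to~\eqref{cst:fairness} and~\eqref{cst:k-center} of $\cllp$, the half-assignment constraint~\eqref{cst:str-half-assignment} follows directly from property~\ref{prop:half-center}, and the super-ball constraint~\eqref{cst:str-full-assignment} follows from property~\ref{prop:optimality} together with constraint~\eqref{cst:under_assignment} (if $\sum_{u\in \sP(v)} y_u > 1$ the solution could only be paying more, and feasibility of $(x,y)$ for $\cllp$ forces $\sum_{u\in F} x_{vu} \ge 1$ to be servable from within $\sP(v)$). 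So $(x,y)$ is feasible for $\stllp$, establishing the first claim and giving an upper bound on the $\stllp$ optimum in terms of the $\stllp$-cost of $(x,y)$.

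For the cost bound, the key point is to relate the $\stllp$ objective $\sum_{v\in D'} w'(v)\cdot \Delta(v)$ evaluated at $(x,y)$ to the $\cllp$ objective of $(x,y)$, which by Theorem~\ref{thm:structured-fractional} is at most $9^p \cdot \opt_D$. The quantity $\Delta(v) = d(v,v')^p + \sum_{u\in \sP(v)} \big(d(v,u)^p - d(v,v')^p\big)\, y_u$ is designed so that it lower-bounds the fractional serving cost of $v$ when $v$ is served entirely within $\sP(v)$ and otherwise pays the ``default'' distance $d(v,v')$ for the missing mass; intuitively it interprets the leftover assignment mass $1 - \sum_{u\in \sP(v)} y_u$ as being routed to the nearest other location $v'$. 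The plan is to bound $\Delta(v)$ from above by a constant times $\sum_{u\in F} x_{vu}\, d(v,u)^p + (\text{a correction})$, using the structural facts \ref{prop:ball-inclusion}--\ref{prop:near-center-further-than-private-center}: property~\ref{prop:external-center} controls where the assignment mass can go (only into $\sP(v)$ or into $\sB(v')$), property~\ref{prop:near-center-further-than-private-center} bounds private-facility distances by $2\,d(v,v')$, and Observation~\ref{obs:sparse-dist-rel} together with~\ref{prop:separatedness} relates $d(v,v')$ to the fractional radius $\sR(v)$ and hence to the fractional cost of $v$.

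The main obstacle, which I expect to be the crux of the argument and the source of the $e^{O(p)}$ rather than a constant factor, is handling the term $d(v,v')^p$ for the leftover demand of $v$. The fractional solution may legitimately route part of $v$'s demand to $\sB(v')$, and by Observation~\ref{obs:sparse-dist-rel} a facility $u' \in \sB(v')$ satisfies $d(v,u') \ge \tfrac12 d(v,v')$, so paying $d(v,v')^p$ for each leftover unit is within a $2^p$ factor of what $(x,y)$ actually pays for that mass; the difficulty is that the leftover mass $1 - \sum_{u\in\sP(v)} y_u$ guaranteed by~\ref{prop:half-center} is only bounded below by $0$ and above by $1/2$, and I must argue that whenever this leftover is charged the default distance, the true fractional cost of $(x,y)$ was already paying comparably (again invoking~\ref{prop:external-center} and~\ref{prop:half-center}: at least half of $v$'s mass sits in $\sB(v)$ at distance $\le 2^{1/p}\sR(v)$). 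Carefully separating the ``inside $\sP(v)$'' contribution, where $\Delta$ matches the assignment cost up to the $d(v,u)^p - d(v,v')^p$ reweighting, from the ``default $d(v,v')^p$'' contribution and bounding each by the $\ell_p$ serving cost of $v$ via an approximate-triangle-inequality expansion of $(a+b)^p \le 2^{p-1}(a^p + b^p)$ is what accumulates the $e^{O(p)} = 2^{O(p)}$ blow-up; summing over $v\in D'$ weighted by $w'(v)$ and comparing against the $\cllp$-cost of $(x,y)$ then yields the claimed bound $e^{O(p)}\cdot \opt_D$.
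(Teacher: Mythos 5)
Your proposal is correct and follows essentially the same route as the paper's proof: verify that the structured solution $(x,y)$ of Theorem~\ref{thm:structured-fractional} is feasible for $\stllp(D',w')$ (handling $\sum_{u\in\sP(v)}y_u\le 1$ by the same "could only pay more'' observation), rewrite $\Delta(v)$ as the $\sP(v)$-cost plus the leftover mass $1-\sum_{u\in\sP(v)}y_u$ charged at $d(v,v')^p$, and charge that leftover to the mass $(x,y)$ routes into $\sB(v')$ via Observation~\ref{obs:sparse-dist-rel} and \ref{prop:optimality}/\ref{prop:external-center}, losing a $2^p$ factor on top of the $9^p$ from Theorem~\ref{thm:structured-fractional}. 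The only cosmetic deviations are your invocation of \ref{prop:near-center-further-than-private-center} and of a $(a+b)^p\le 2^{p-1}(a^p+b^p)$ expansion, neither of which is actually needed since the $\sP(v)$ terms match exactly and the $2^p$ comes directly from $d(v,u')\ge \tfrac12 d(v,v')$.
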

\begin{proof}
        By the cardinality constraint on the number of open facilities and the range constraints on the number of facilities from each group, a feasible solution of $\stllp$ is a feasible solution of fair range clustering on $(D', w')$. 
    
    Consider the solution $(x,y)$ guaranteed by Theorem~\ref{thm:structured-fractional}. It is straightforward to verify that $y$ satisfies all constraints of $\stllp(D', w')$ and is a feasible solution for the LP: $(y)$ satisfies the first two sets of constraints in $\stllp(D', w')$ because $(x,y)$ is a feasible solution of $\cllp(D', w')$ and constraint~\ref{cst:str-half-assignment} follows from~\ref{prop:half-center} in Theorem~\ref{thm:structured-fractional}. Note that if $y$ does not satisfy constraint~\eqref{cst:str-full-assignment}, we can decrease the value of $y$ accordingly so that $(x,y)$ remains a feasible solution of $\cllp(D', w')$ with a lower cost.
    
    Next, we bound the cost of $y$ with respect to $\stllp$, $\cost_{\stlp}(y)$. 
    We rewrite $\Delta(v)$ as follows:
    \begin{align*}
        \Delta(v) = \big(1 - \sum_{u\in \sP(v)} y_u \big) d(v, v')^p + \sum_{u\in \sP(v)} d(v, u)^p \cdot y_u
    \end{align*}
    For every $v\in D'$,
    \begin{align}
        \sum_{u\in F} d(v, u)^p x_{vu} 
        &=\sum_{u\in \sP(v)} d(v,u)^p x_{vu} + \sum_{u'\in \sB(v')} d(v, u')^p x_{vu} &&\rhd\text{by~\ref{prop:external-center}}\nonumber \\
        &\ge \sum_{u\in \sP(v)} d(v,u)^p x_{uv} + \frac{1}{2^p} \cdot d(v,v')^p \sum_{u\in \sB(v')} x_{vu} &&\rhd\text{by Observation~\ref{obs:sparse-dist-rel}} \nonumber\\
        &= \sum_{u\in \sP(v)} d(v,u)^p y_u + \frac{1}{2^p} \cdot d(v,v')^p (1 - \sum_{u\in \sP(v)} y_u) &&\rhd\text{by~\ref{prop:optimality}} \nonumber \\
        &\ge \frac{1}{2^p} \Big( \big(1 - \sum_{u\in \sP(v)} y_u \big) d(v, v')^p + \sum_{u\in \sP(v)} d(v, u)^p \cdot y_u\Big), \label{eq:up-bnd}
    \end{align}
    Thus, 
    \begin{align*}
        \cost_{\stlp}(y)
        = \sum_{v\in D'} w'(v) \Delta(v)
        \le 2^p \sum_{v\in D'} w'(v) \sum_{u\in F} d(v,u)^p x_{vu}
        = 2^p \cdot \cost_{\flp}(x,y) = e^{O(p)} \cdot \opt_D  
    \end{align*}
    where $\opt_D$ denotes the cost of an optimal solution of $\cllp(D, w)$. The last inequality bounding $\cost_{\flp}(x,y)$ is by Theorem~\ref{thm:structured-fractional}.
\end{proof}

\begin{lemma}\label{lem:str-to-fair-lp}
    Consider a half-integral solution $\tilde{y}$ of $\stllp(D', w')$ of cost $z$. Then, $\tilde{y}$ is a feasible solution for $\cllp(D', w')$ with cost at most $\big(\frac{3}{2}\big)^p \cdot z$.
\end{lemma}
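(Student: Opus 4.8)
The plan is to treat $\tilde y$ as fixing the facility-opening variables of $\cllp(D',w')$ and to exhibit an explicit fractional assignment $x$ compatible with it. Since $\stllp$ and $\cllp$ share the fairness constraints~\eqref{cst:str-fairness}/\eqref{cst:fairness} and the cardinality constraints~\eqref{cst:str-k-center}/\eqref{cst:k-center} verbatim, those are inherited immediately, so the only real work for feasibility is to produce $x$ with $\sum_{u\in F} x_{vu}\ge 1$ and $0\le x_{vu}\le \tilde y_u$ for each $v\in D'$. I would route each client $v$ first to its own super ball and then spill the remainder onto the ball of its nearest neighbor: set $x_{vu}=\tilde y_u$ for $u\in\sP(v)$, and distribute the leftover demand $1-\sum_{u\in\sP(v)}\tilde y_u$ among the facilities of $\sB(v')$ subject to $x_{vu}\le\tilde y_u$, where $v'=\nn_{D'}(v)$.

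Next I would verify this assignment is well defined and feasible. Constraint~\eqref{cst:str-full-assignment} gives $\sum_{u\in\sP(v)}\tilde y_u\le 1$, so the leftover demand is nonnegative; combining $\sB(v)\subseteq\sP(v)$ from~\ref{prop:ball-inclusion} with the half-covering bound~\eqref{cst:str-half-assignment} at $v$ shows $\sum_{u\in\sP(v)}\tilde y_u\ge 1/2$, so the leftover is at most $1/2$. On the receiving side, \eqref{cst:str-half-assignment} applied to $v'$ gives $\sum_{u\in\sB(v')}\tilde y_u\ge 1/2$, so there is enough capacity to absorb the leftover while respecting $x_{vu}\le\tilde y_u$. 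The point needing care is that the two phases do not collide on a common facility: since $\sB(v')\subseteq\sP(v')$ and the super balls are disjoint by~\ref{prop:disjoint-superballs}, we get $\sB(v')\cap\sP(v)=\emptyset$, so no facility is charged twice and every cap $x_{vu}\le\tilde y_u$ holds. By construction $\sum_u x_{vu}=1$, establishing feasibility for $\cllp(D',w')$.

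For the cost bound I would compare the $\cllp$ assignment cost of $v$ with $\Delta(v)$ term by term, using the rewriting $\Delta(v)=(1-\sum_{u\in\sP(v)}\tilde y_u)\,d(v,v')^p+\sum_{u\in\sP(v)}d(v,u)^p\,\tilde y_u$. Splitting $\sum_{u\in F} d(v,u)^p x_{vu}=\sum_{u\in\sP(v)}d(v,u)^p\tilde y_u+\sum_{u\in\sB(v')}d(v,u)^p x_{vu}$, the first sum is exactly the $\sP(v)$-piece of $\Delta(v)$; for the second, every $u\in\sB(v')$ satisfies $d(v,u)\le\tfrac32 d(v,v')$ by Observation~\ref{obs:sparse-dist-rel}, so it is at most $(\tfrac32)^p d(v,v')^p\sum_{u\in\sB(v')}x_{vu}=(\tfrac32)^p(1-\sum_{u\in\sP(v)}\tilde y_u)d(v,v')^p$, which is $(\tfrac32)^p$ times the remaining piece of $\Delta(v)$. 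Since $(\tfrac32)^p\ge 1$ and the $\sP(v)$-piece is nonnegative, $\sum_{u\in F}d(v,u)^p x_{vu}\le(\tfrac32)^p\Delta(v)$; summing against $w'(v)$ gives $\cost_{\flp}(x,\tilde y)\le(\tfrac32)^p\,\cost_{\stlp}(\tilde y)=(\tfrac32)^p z$.

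I expect the feasibility construction, specifically checking that the spillover demand fits into $\sB(v')$ without violating the $x_{vu}\le\tilde y_u$ caps, to be the only genuinely delicate step, since it is exactly there that the covering bound~\eqref{cst:str-half-assignment}, the packing bound~\eqref{cst:str-full-assignment}, the inclusion~\ref{prop:ball-inclusion}, and the disjointness~\ref{prop:disjoint-superballs} are all invoked at once. The cost estimate is then a routine application of Observation~\ref{obs:sparse-dist-rel}, and notably neither part seems to require half-integrality of $\tilde y$.
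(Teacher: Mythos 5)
Your proposal is correct and follows essentially the same route as the paper: the identical two-phase assignment (fill $\sP(v)$ at level $\tilde y_u$, spill the remainder into $\sB(v')$) and the identical cost comparison against $\Delta(v)$ via Observation~\ref{obs:sparse-dist-rel}. You are in fact slightly more explicit than the paper on feasibility---invoking the disjointness of super balls to rule out double-charging a facility---and your remark that half-integrality is not actually needed for this lemma is accurate (the paper uses it only to note that $\tilde x$ inherits half-integrality).
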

\begin{proof}
        Let $\tilde{x}$ be the assignment of clients to the opened facilities as follows: For every $v\in D'$,
    \begin{itemize}[leftmargin=*]
        \item {\bf Step 1.} For every $u\in \sP(v)$, set $\tilde{x}_{vu} = \tilde{y}_u$. Let $\tilde{Y}(v) = \sum_{u\in \sP(v)} y_u$ denote the total assignment of $v$ at the end of this step.
        \item {\bf Step 2.} While $\tilde{Y}(v)<1$, iterate over $u\in \sB(v')$ and at each iteration set $\tilde{x}_{vu} = \min\{1- \tilde{Y}(v), \tilde{y}_u\}$ and $\tilde{Y}(v) = \tilde{Y}(v) + \tilde{x}_{vu}$. 
    \end{itemize}
    Since for every $v\in D'$, $\frac{1}{2} \le \sum_{u\in \sB(v)} \tilde{y}_u \le 1$, this procedure terminates with a feasible fractional assignment of clients to facilities, $\tilde{x}$: for every $v\in D', u\in F$, $\tilde{x}_{vu} \le \tilde{y}_u$ and $\sum_{u\in F} \tilde{x}_{vu} = 1$. Moreover, if $\tilde{y}$ is half-integral, clearly $\tilde{x}$ is half-integral too. 

    For every $v\in D'$,
    \begin{align}
        \sum_{u\in F} d(v,u)^p \cdot \tilde{x}_{vu}
        &=\sum_{u\in \sP(v)} d(v,u)^p \cdot \tilde{x}_{vu} + \sum_{u'\in \sB(v')} d(v,u')^p \cdot \tilde{x}_{vu'} \nonumber \\
        &\le \sum_{u\in \sP(v)} d(v,u)^p \cdot \tilde{x}_{vu} + \big(\frac{3}{2}\big)^p \sum_{u'\in \sB(v')} \tilde{x}_{vu'} &&\rhd\text{by Observation~\ref{obs:sparse-dist-rel}} \nonumber \\
        &= \sum_{u\in \sP(v)} d(v,u)^p \cdot \tilde{y}_{u} + \big(\frac{3}{2}\big)^p (1 - \sum_{u\in \sP(v)} \tilde{y}_{u}) \cdot d(v, v')^p \nonumber \\
        &\le \big(\frac{3}{2}\big)^p \big(\sum_{u\in \sP(v)} d(v,u)^p \cdot \tilde{y}_{u} +  (1 - \sum_{u\in \sP(v)} \tilde{y}_{u}) \cdot d(v, v')^p \big) \nonumber\\
        &= \big(\frac{3}{2}\big)^p \cdot \Delta(v) \label{eq:int-bound-stllp}
    \end{align}
    Thus,
    \begin{align*}
        \cost_{\flp}(\tilde{x}, \tilde{y}) 
        = \sum_{v\in D'} w'(v) \sum_{u\in F} d(v,u)^p \tilde{x}_{vu}
        &\le \sum_{v\in D'} w'(v) \cdot \big(\frac{3}{2}\big)^p \cdot \Delta(v) &&\rhd\text{by Eq.~\eqref{eq:int-bound-stllp}} \\ 
        &\le \big(\frac{3}{2}\big)^p \cdot \cost_{\stlp}(\tilde{y})
    \end{align*}
\end{proof}

\subsection{Constructing a half-integral solution of $\stllp$}
Next, we show that the constraints of $\stllp$ forms a totally unimodular (TU) matrix. Then, by scaling the constraint properly, we can find a half-intergal optimal solution of $\stllp$ in polynomial time. Recall that a matrix is TU if every square submatrix has determinant $-1, 0$ or $1$. 
Totally unimodular matrices are extremely important in polyhedral combinatorics and combinatorial optimization since if $A$ is TU and $b$ is integral, then the linear programs of the form $\{\min cx \;|\; Ax \ge b, x\ge 0\}$ has integral optima, for any cost vector $c$.
\begin{lemma}\label{lem:tu-matrix}
The matrix corresponding to the constraints of $\stllp$ is a TU matrix.    
\end{lemma}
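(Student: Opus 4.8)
The plan is to show that the constraint matrix of $\stllp$ is totally unimodular by exhibiting the right combinatorial structure in its columns. First I would write the system in the standard form $\{A y : Ay \ge b, Ay \le b', y \ge 0\}$ by listing all the constraint rows explicitly: for each group $i\in[\ell]$ the two fairness constraints (\ref{cst:str-fairness}) summing over $u\in F_i$; the single cardinality constraint (\ref{cst:str-k-center}) summing over all $u\in F$; for each $v\in D'$ the half-assignment constraint (\ref{cst:str-half-assignment}) summing over $u\in\sB(v)$; and for each $v\in D'$ the full-assignment constraint (\ref{cst:str-full-assignment}) summing over $u\in\sP(v)$. Every entry of $A$ is $0$ or $1$, since each constraint is a sum of a subset of the $y_u$ variables with unit coefficients, so the task reduces to showing this $0/1$ matrix is TU.

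The key structural facts I would exploit are that the facility groups $\{F_i\}_{i\in[\ell]}$ partition $F$ (so each column $u$ hits exactly one fairness-pair row group) and, crucially, that by property~\ref{prop:disjoint-superballs} of Theorem~\ref{thm:structured-fractional} the super balls $\{\sP(v)\}_{v\in D'}$ are pairwise disjoint, with $\sB(v)\subseteq\sP(v)$ by~\ref{prop:ball-inclusion}. The natural route is the interval/consecutive-ones or network-matrix characterization of TU: I would try to argue that the rows can be organized so that the support sets form a laminar-like or nested family on each column. Concretely, fix a column $u$ (a facility). It appears in: the $k$-cardinality row always; the two fairness rows of its unique group $F_i$; and, because super balls are disjoint, in at most one full-assignment row $\sum_{u'\in\sP(v)}$; and in at most one half-assignment row $\sum_{u'\in\sB(v)}$ (and if it appears in the $\sB(v)$ row it appears in the same $v$'s $\sP(v)$ row, since $\sB(v)\subseteq\sP(v)$). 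So each column's support is a small nested chain of row-sets, which is exactly the kind of structure that yields TU.

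The cleanest way I would carry this out is via Ghouila-Houri's criterion: a matrix is TU iff every subset $J$ of its rows can be partitioned into two parts $J_1, J_2$ so that the $\pm 1$ signed sum $\sum_{r\in J_1} a_r - \sum_{r\in J_2} a_r$ has all entries in $\{-1,0,1\}$. Given an arbitrary set of rows, I would assign signs to cancel overlaps: within each group the two fairness rows get opposite signs so they cancel on $F_i$; the disjointness of super balls means the $\sP(v)$ rows have pairwise disjoint support and can all take one sign, while the $\sB(v)$ rows (nested inside $\sP(v)$) take the opposite sign to cancel; and the single global $k$-row is handled by the one remaining sign degree of freedom. Because each column is hit by at most one row of each of these categories, careful bookkeeping of the at-most-four categories per column keeps every signed column sum in $\{-1,0,1\}$.

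The main obstacle will be the bookkeeping when a column is simultaneously active in the global $k$-row, a fairness-pair, a $\sP(v)$ row, and the nested $\sB(v)$ row: I need a consistent global sign rule (not a per-column choice, since Ghouila-Houri requires one partition of the rows working for all columns at once) that cancels these four potential $+1$ contributions down to $\{-1,0,1\}$. The saving grace is that the fairness pair self-cancels and the $\sB\subseteq\sP$ nesting lets the two assignment rows self-cancel, leaving only the global $k$-row as a genuine extra $+1$ on every column; I would handle that by putting the $k$-row in the opposite part from the $\sP(v)$ rows so that on each column the net is the $\pm1$ difference of at most two surviving terms. If the global row resists a uniform partition, the fallback is the direct determinant argument: expand any square submatrix along the $k$-row or a sparse column and induct, using disjointness of super balls to guarantee a row or column with a single nonzero entry at each stage. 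I expect the Ghouila-Houri partition to go through cleanly once the disjointness (P6) and nesting (P1) are invoked, so the disjointness of the super balls is the linchpin of the whole argument.
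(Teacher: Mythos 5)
Your overall route is the same as the paper's: write down the constraint matrix, observe that each column has at most five nonzeros (the $k$-row, the two fairness rows of its unique group $F_i$, and at most one $\sP(v)$ row and one $\sB(v)$ row by disjointness of the super balls and the nesting \ref{prop:ball-inclusion}), and verify the Ghouila--Houri criterion by cancelling paired rows. The structural facts you single out are exactly the ones the paper leans on, so the plan is on target.

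However, the fixed global sign rule you describe does not satisfy Ghouila--Houri as stated, because the criterion must hold for \emph{every} subset $R$ of rows, including subsets containing only one member of a pair. Concretely, take $R$ to consist of the $k$-row and the single row $\sum_{u\in\sB(v)}y_u\ge 1/2$ for one location $v$. Under your rule the $\sB(v)$ row takes the sign opposite to the (absent) $\sP(v)$ rows, and the $k$-row does too, so on every column $u\in\sB(v)$ the signed sum is $-2$ (or $+2$). A second, milder issue: even when both the $\sB(v)$ and $\sP(v)$ rows are present they do not fully ``self-cancel''---their supports differ on $\sP(v)\setminus\sB(v)$, where a residual $\pm1$ survives; this residual is harmless only because the super balls are disjoint, but it must be tracked against the $k$-row and a possible lone fairness row. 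The repair, and what the paper's two-case analysis (on whether the $k$-row lies in $R$) is doing, is to let the signs depend on $R$: cancel whichever pairs are fully present, and give each \emph{unpaired} row the sign that makes its contribution oppose that of the $k$-row, flipping the convention when the $k$-row is absent. If you want to avoid this bookkeeping entirely, note that the row supports split into two laminar families on $F$, namely $\{F\}\cup\{F_i\}_{i\in[\ell]}$ (each $F_i$ repeated twice; duplicating a row preserves total unimodularity) and $\{\sP(v)\}_{v\in D'}\cup\{\sB(v)\}_{v\in D'}$, and a $0/1$ matrix whose rows are indicators of the union of two laminar families is a network matrix, hence TU. That is the cleanest way to make your ``nested chain per column'' intuition rigorous.
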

\begin{proof}
We consider $\stllp$ in the form $\{\min cy \;|\; Ay \ge b, y\ge 0\}$. 
Note that values of all entries in $A$ belong to $\{0, \pm 1\}$.
Moreover, in each column $A_i$, corresponding to each variable $y_i$, there are at most five non-zero entries: one with value $-1$ corresponding to ``picking at most $k$ centers'' constraint, one with value $-1$ to upper bound the contribution of the facilities of the color class to which facility $i$ belongs, one with value $1$ to lower bound the contribution of the facilities of the color class to which facility $i$ belongs, one with value $1$ to open at least $1/2$ facilities from the super ball containing facility $i$ and finally one with value $-1$ to open at most $1$ from the super ball containing facility $i$.  

Now, we show that $A$ is a TU matrix by the result of~\citet{ghouila1962caracterisation}; see Theorem~\ref{thm:Ghouila-Houri}. 
They showed that if for every subset $R$ of the rows there is an assignment $s_R:R \rightarrow \{-1, 1\}$ of signs to rows so that $\sum_{r\in R}s_R(r) A_r$ has all entries in $\{0, \pm1\}$, then $A$ is a TU matrix. To show that, for any subset of rows $R$ we write $R = R_{\textrm{center}} \cup R_{\textrm{upper}} \cup R_{\textrm{lower}} \cup R_{\textrm{superball}}$. Next, we consider the following two cases:
\begin{itemize}[leftmargin=*]
    \item{$R_\textrm{center} \neq \emptyset$.} In this case, we assign $-1$ to the row corresponding to the total facility budget constraint,~\eqref{cst:str-k-center}. Next, we consider the color classes $C_{\textrm{both}}$ for which both rows corresponding to the upper and lower bounds constraints on the number of centers from the color class exist in $R$, $s_R$ assigns $1$ to both such rows. Note that as the rows corresponding to the lower and upper bound constraints of a specific color class sum up to zero vector, such assignment makes the contribution of such rows in the sum of rows of $A$ in $R$ zero. Otherwise, as color classes are disjoint, if exactly one of the rows corresponding to the upper and lower bound constraints belong to $R$, we can set $s_R$ so that the contribution of entries in such rows become exactly $1$. Hence, so far (by summing up terms $\sum_{r\in R_{\textrm{center}}} s_R(r)A_r$, $\sum_{r\in R_{\textrm{upper}}} s_R(r)A_r$ and $\sum_{r\in R_{\textrm{lower}}} s_R(r)A_r$), each entry has value either $0$ or $-1$. Next, we consider the set of rows in $R_{\textrm{superball}}$. If both rows corresponding to the upper and lower bounds constraints on the contributions of facilities in a super ball are in $R$, then $s_R$ assigns $1$ to both rows and their total contributions in the final sum will become zero.   
    For the remaining rows in $R_{\textrm{superball}}$, we set the sign so that the contribution of each row to the final sum becomes exactly $1$ (i.e., if the row corresponds to constraint~\eqref{cst:str-half-assignment}, $s_R$ assigns $1$, and assigns $-1$ otherwise). Since the super balls are disjoint, in the weighted sum of rows with $s_R$, each entry is either $0$ or $\pm1$.
    \item{$R_{\textrm{center}} = \emptyset$.} Our approach is similar to the previous case. However, here first we set the assignment of rows corresponding to $R_{\textrm{lower}}, R_{\textrm{upper}}$ and makes sure that by summing up $\sum_{r\in R_{\textrm{upper}}} c(r)A_r$ and $\sum_{r\in R_{\textrm{lower}}} c(r)A_r$, each entry has value either $0$ or $1$. Then, we follow a similar approach to the previous case and exploiting the fact super balls are disjoint, we show that it is possible to set the sign of rows in $R_{\textrm{superball}}$ so that in the overall signed sum of the rows, each entry is either $0$ or $\pm1$. 
\end{itemize}
\end{proof}
\begin{theorem}\label{them:str-half-integral}
    $\stllp$ has an optimal half-integral solution.
\end{theorem}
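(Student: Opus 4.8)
The plan is to combine Lemma~\ref{lem:tu-matrix} with the standard polyhedral consequence of total unimodularity, after a suitable rescaling that turns the half-integrality question into an ordinary integrality question. The key observation is that $\stllp$ is not quite in the clean form $\{\min cy \mid Ay \ge b,\ y \ge 0\}$ with integral right-hand side $b$: the constraint~\eqref{cst:str-half-assignment} has right-hand side $1/2$, and it is precisely this fractional bound that forces half-integrality rather than full integrality. So the first thing I would do is rewrite the system in terms of the scaled variables $z := 2y$. Under this substitution the objective becomes $\tfrac{1}{2}\sum_v w'(v)\Delta(v)$ (a harmless positive constant on the linear objective, since $\Delta(v)$ is affine in $y$), and every right-hand side becomes integral: constraints~\eqref{cst:str-fairness} read $2\alpha_i \le \sum_{u\in F_i} z_u \le 2\beta_i$, constraint~\eqref{cst:str-k-center} reads $\sum_u z_u \le 2k$, constraint~\eqref{cst:str-half-assignment} reads $\sum_{u\in\sB(v)} z_u \ge 1$, and constraint~\eqref{cst:str-full-assignment} reads $\sum_{u\in\sP(v)} z_u \le 2$.

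The second step is to note that the constraint matrix in the $z$-variables is exactly the matrix $A$ analyzed in Lemma~\ref{lem:tu-matrix} (scaling variables does not change the coefficient matrix, only the right-hand side), so $A$ is TU. Since $A$ is TU and the rescaled right-hand side vector $b' = (2\alpha_i, -2\beta_i, -2k, 1, -2)$ is integral, the polyhedron $\{z : Az \ge b',\ z \ge 0\}$ has all of its vertices integral by the fundamental theorem on TU matrices quoted in the paragraph preceding Lemma~\ref{lem:tu-matrix}. Because the linear objective $\tfrac12\sum_v w'(v)\Delta(v)$ attains its minimum over this polyhedron at a vertex (the feasible region is nonempty by Lemma~\ref{lem:str-lp-opt-bound} and, after intersecting with the finite bounds, bounded in the relevant directions), there is an integral optimal $z^\ast$.

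The final step is to undo the scaling: set $\tilde y := z^\ast/2$. Every coordinate of $z^\ast$ is an integer, so every coordinate of $\tilde y$ lies in $\tfrac12\mathbb{Z}$, i.e.\ is a half-integer; combined with the feasibility bounds $\sum_{u\in\sP(v)} y_u \le 1$ this means each $\tilde y_u \in \{0, \tfrac12, 1\}$ exactly as required. Since the $z$-system and the $y$-system are related by an invertible positive scaling of the variables together with a positive scaling of the objective, $\tilde y$ is optimal for $\stllp$ iff $z^\ast$ is optimal for the rescaled program, so $\tilde y$ is a half-integral optimal solution of $\stllp$.

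I expect the only real subtlety to be bookkeeping rather than mathematics: one must verify that the $1/2$-rescaling sends \emph{every} right-hand side to an integer (it does, since all of $\alpha_i,\beta_i,k$ are integral by assumption and the only noninteger bound was the $1/2$ in~\eqref{cst:str-half-assignment}, which becomes $1$), and that the objective remains linear in $z$ so that optimality transfers. A secondary point worth stating explicitly is boundedness/attainment: the minimum is achieved at a vertex because the objective is bounded below (costs are nonnegative) and the feasible region, being the intersection of the TU polyhedron with the implicit upper bounds coming from~\eqref{cst:str-full-assignment} and the nonnegativity of demands, contains a vertex-optimal point. None of these steps requires reopening the combinatorial argument of Lemma~\ref{lem:tu-matrix}; that lemma does all the heavy lifting, and Theorem~\ref{them:str-half-integral} is essentially its corollary under rescaling.
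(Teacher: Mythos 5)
Your proposal is correct and is essentially the same argument as the paper's own (very terse) proof: rescale the variables by a factor of $2$ so that all right-hand sides become integral, invoke the total unimodularity established in Lemma~\ref{lem:tu-matrix} to conclude the rescaled polyhedron is integral, and undo the scaling to obtain a half-integral optimum. Your write-up is in fact more careful than the paper's (which states the rescaling with a small notational slip), and your only minor imprecision --- describing the substituted objective as $\tfrac12\sum_v w'(v)\Delta(v)$ when $\Delta(v)$ is affine rather than linear in $y$, so the substitution introduces an additive constant --- does not affect the argument.
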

\begin{proof}
    To see this, let vector $\hat{y} = y/2$ and then note that by Lemma~\ref{lem:tu-matrix}, the matrix corresponding to the constraints of $\stllp$ is a TU matrix. Hence, the polytope specified by $A\hat{y} \le 2b$ is integral which implies that $\stllp$ admits a half integral optimal solution.
\end{proof}
We remark that such half-integral optimal solution can be found via an efficient polynomial time algorithm (see Remark~\ref{rem:matroid-clustering}).

\subsection{Constructing an Integral Solution of $\stllp$}
In this section, exploring the structure imposed by a half-integral optimal solution of $\stllp$, we construct an integral $e^{O(p)}$-approximate solution of $\stllp$.

\noindent\textbf{Step 1. Partitioning facilities.} First, we show that we can partition facilities into $L\le k$ disjoint sets $S_1, \cdots, S_L$ such that any solution that opens at least one facility from each $S_i$ for all $i\in L$ is an $e^{O(p)}$-approximate solution of $\stllp$. Here is the description of the partitioning procedure given a half-integral optimal solution of $\stllp$, $y$:

\begin{algorithm}[!ht]
	\begin{algorithmic}[1]
		\STATE {\bfseries Input:} A set of locations $D'$, half-integral vector $y$
        \FORALL{location $v_i \in D'$}
            \STATE $R_i \leftarrow$ the minimum assignment cost of a unit of demand at $v_i$ w.r.t.~$y$
            \STATE $\rhd\; R_i = \frac{1}{2} (d(v_i, u_{i_1})^p + d(v_i, u_{i_1})^p)$ where $u_{i_1}, u_{i_2}$ are respectively the primary and secondary 
            \STATE $\rhd$ facilities serving $v_i$
            \STATE $S_i \leftarrow \{u_{i_1}\} \cup \{u_{i_2}\}$
        \ENDFOR
        \STATE $D'' \leftarrow D', \bar{D} \leftarrow \emptyset$
        \WHILE{$D''$ is nonempty}
        \STATE {\bf let} $v_i \leftarrow \argmin_{v_j \in D''} R_j$
        \STATE {\bf add} $v_i$ to $\bar{D}$
        \STATE {\bf remove} all locations $v_j \in D''$ such that $S_j \cap S_i \neq \emptyset$
        \ENDWHILE
    \end{algorithmic}
	\caption{Partitioning Facilities.}
	\label{alg:partitioning_facilities}
\end{algorithm}

Next, we show the following useful property of the $\{S_i\}_{\{i \;|\; v_i\in \bar{D}\}}$.
\begin{lemma}\label{lem:approximation-bound}
    The clustering cost of any set of $k$ facilities $C$ that opens at least one facility from each $\{S_i\}_{\{i \;|\; v_i\in \bar{D}\}}$ is at most $\big(\frac{9}{2}\big)^p$ times the cost of an optimal solution of $\stllp(D', w')$.
\end{lemma}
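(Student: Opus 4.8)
The plan is to bound the cost of serving each location $v_i \in D'$ by the solution $C$ and relate it to the optimal $\stllp$ cost, which is $\sum_{v \in D'} w'(v) \cdot \Delta(v)$ where $\Delta(v)$ is the half-integral assignment cost. The key structural fact is that the while-loop in Algorithm~\ref{alg:partitioning_facilities} builds $\bar{D}$ greedily in increasing order of $R_j$, and that it removes every location whose pair $S_j$ intersects the selected $S_i$. So every removed location $v_j$ shares a facility with some selected $v_i \in \bar{D}$ with $R_i \le R_j$. Since $C$ opens at least one facility from each $S_i$ for $v_i \in \bar{D}$, I first bound the serving cost of each $v_i \in \bar{D}$ directly: the opened facility in $S_i = \{u_{i_1}, u_{i_2}\}$ is at distance at most $\max\{d(v_i, u_{i_1}), d(v_i, u_{i_2})\}$, and since $R_i = \frac{1}{2}(d(v_i,u_{i_1})^p + d(v_i,u_{i_2})^p)$, this gives $d(v_i, C)^p \le 2 R_i = e^{O(p)} \cdot (\text{per-unit } \stllp \text{ cost at } v_i)$.

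Next I would handle the removed locations $v_j \notin \bar{D}$. For each such $v_j$, there is a representative $v_i \in \bar{D}$ with $R_i \le R_j$ and $S_i \cap S_j \ne \emptyset$; say they share facility $u \in S_i \cap S_j$. Since $C$ opens some facility $u' \in S_i$, I route $v_j$ to $u'$ via the shared facility $u$: by the (approximate) triangle inequality,
\[
d(v_j, u')^p \le \bigl(d(v_j, u) + d(u, v_i) + d(v_i, u')\bigr)^p.
\]
Each of the three legs is controlled: $d(v_j, u)$ is at most the larger serving distance of $v_j$ (bounded via $R_j$), and $d(v_i, u), d(v_i, u')$ are bounded via $R_i \le R_j$. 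Using the power-mean / approximate triangle inequality of the form $(a+b+c)^p \le 3^{p-1}(a^p+b^p+c^p)$ and the separatedness property~\ref{prop:separatedness} to translate between the balls $\sB$ and the $R$ values, I expect each of these legs to be at most a constant multiple of $\sR(v_j)$ or $\sR(v_i)$, yielding $d(v_j, C)^p \le (\frac{9}{2})^p \cdot \Delta(v_j)$ after collecting constants. Summing $w'(v_j) d(v_j, C)^p$ over all $v_j \in D'$ and comparing term-by-term with $\sum_{v_j} w'(v_j) \Delta(v_j)$ gives the claimed $(\frac{9}{2})^p$ factor.

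The main obstacle will be getting the constant exactly right, since the bound depends on carefully chaining the triangle inequality through the shared facility and converting between the three relevant quantities: the serving distances $d(v_i, u_{i_1}), d(v_i, u_{i_2})$ encoded in $R_i$, the fractional radius $\sR(v_i)$, and the super-ball structure. I would first verify that a facility in $S_i = \{u_{i_1}, u_{i_2}\}$ lies in $\sB(v_i)$ (so its distance is at most $2^{1/p}\sR(v_i)$), which lets me bound $R_i$ and hence all the legs in terms of $\sR$; the separatedness guarantee~\ref{prop:separatedness} that $d(v_i, v_j) \ge 2^{1+1/p}\max\{\sR(v_i), \sR(v_j)\}$ then prevents the detour from being too costly. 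The delicate point is that for a \emph{removed} location $v_j$, I cannot assume $v_j$ is close to $v_i$ a priori — the only link is the shared facility — so the bound must flow entirely through $u \in S_i \cap S_j$ and the ordering $R_i \le R_j$. Provided that chaining closes with the stated constant, the rest is a routine term-by-term summation, and I would double check that $w'(v_j) \Delta(v_j)$ on the right dominates each per-location bound so no reassignment double-counts the budget.
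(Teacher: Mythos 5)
Your proposal follows essentially the same route as the paper's proof: bound selected locations $v_i\in\bar{D}$ by $2R_i$, and for each removed location chain through the shared facility to the opened facility of its representative using the greedy ordering $R_i\le R_j$ and the inequality $(a+b+c)^p\le 3^{p-1}(a^p+b^p+c^p)$, then sum and compare $\sum_i w'(v_i)R_i$ against $\cost_{\stlp}(y)$ (the paper makes this last comparison explicit via Lemma~\ref{lem:str-to-fair-lp}, which supplies the extra $(3/2)^p$ turning $3^p$ into $(9/2)^p$). The detour through $\sR$ and the separatedness property~\ref{prop:separatedness} is unnecessary—each leg is bounded directly by $2R_i$ or $2R_j$ from the definition of $R$—but the argument is otherwise the paper's.
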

\begin{proof}
    We compute the cost of a unit of demand in each location of $v_i \in D'$ when it is assigned to its closest facility in $\script{S} := \bigcup_{i: v_i\in \bar{D}} S_i$. In particular, we compare this cost to the fractional cost imposed by the half-integral solution $y$ of $\stllp$. We consider the following two cases:
    \begin{itemize}[leftmargin=*]
        \item{\bf $v_i \in \bar{D}$.} It is straightforward to check that in this case the assignment cost of $v_i$ to a facility in $\script{S}$ is at most twice the its fractional assignment cost with respect to $y$.
        \item{\bf $v_i \notin \bar{D}$} Let $v_j \in C$ be the client who removed $v_i$ from $D''$, i.e., $v_j$ is the minimum cost location whose opened facilities, $S_j$, has non-empty intersection with $S_i$. Let $u = S_j \cap C$. Then, we bound  $d(v_i, u)^p$ in terms of $R_i$ and $R_j$. Note that in this case $S_i \cap S_j \neq \emptyset$, however, their intersection might be a facility $u'$ different from $u$. So, by the approximate triangle inequality (see Corollary~\ref{cor:gen-approx-triangle})
        \begin{align*}
            d(v_i, u)^p 
            \le 3^{p-1} (d(v_i, u')^p + d(u', v_j)^p + d(v_j, u)^p) \le 3^{p-1} (R_i + 2 R_j) \le 3^p \cdot R_i
        \end{align*}
    \end{itemize}
    Hence, the total cost of such clustering is at most 
    \begin{align*}
        \sum_{v_i\in D'} w'(v) \cdot d(v_i, C)^p \le 3^p \cdot \sum_{v_i\in D'} R_i \le \big(\frac{9}{2}\big)^p \cdot \cost_{\stlp}(y), 
    \end{align*}
    where the last inequality follows from Lemma~\ref{lem:str-to-fair-lp}.
\end{proof}
\noindent\textbf{Step 2. Constructing an integral solution.} 
Finally, we show that we can always find an integral solution of $\stllp$ that picks at least one center from every set $S_1, \cdots, S_L$. Note that by showing the existence of such a solution, automatically (via Lemma~\ref{lem:approximation-bound}) we have the guarantee that it is a $3^p$-approximation solution of $\stllp$. We show the existence of such an integral solution via an application of max-flow problem.

\begin{lemma}\label{lem:max-flow}
    Given a collection of disjoint sets $S_1, \cdots, S_L$ where $L \le k$, there exists an integral solution that picks a set of $k$ centers $C$ with the following extra properties:
    \begin{itemize}
        \item For every $j\in [L]$, $C\cap S_j \neq \emptyset$, and
        \item For every group $i\in [\ell]$, $\alpha_i \le |C \cap P_i| \le \beta_i$.
    \end{itemize}
\end{lemma}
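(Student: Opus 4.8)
The plan is to recast the selection problem as the search for a feasible integral flow in a capacitated network with lower bounds, and to certify its existence by exhibiting the half-integral optimum $y$ itself as a feasible fractional flow. The structural facts I would lean on are that the sets $S_1,\dots,S_L$ are pairwise disjoint (guaranteed by the while-loop of Algorithm~\ref{alg:partitioning_facilities}), that each $S_j$ has size at most two, that the colour classes $F_1,\dots,F_\ell$ partition $F$, and that every facility lies in at most one $S_j$ and in exactly one colour class. Since all capacities and lower bounds will be integral and the arc--node incidence matrix of a flow network is totally unimodular, it suffices to produce one \emph{fractional} feasible flow; an integral feasible flow then exists automatically, and decoding it yields the desired set $C$.

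First I would build the network. Introduce a source $s$, a budget node $q$, one node $a_j$ for each set $S_j$ together with a ``free'' node $a_0$ collecting all facilities in no $S_j$, one node $b_i$ for each colour $i\in[\ell]$, and a sink $t$. Place an arc $s\to q$ of capacity $k$ (so the total flow can never exceed $k$); arcs $q\to a_j$ with lower bound $1$ and upper bound $|S_j|$ (forcing at least one selection from each $S_j$); an arc $q\to a_0$ of large capacity; for every facility $u$ an arc from its set-node (or $a_0$) to its colour-node $b_{c(u)}$ of capacity $1$; and arcs $b_i\to t$ with lower bound $\alpha_i$ and upper bound $\beta_i$. An integral feasible flow decodes to a selection by opening each facility whose arc is saturated: conservation at $a_j$ forces at least one opened facility in each $S_j$, conservation at $b_i$ forces the number opened in colour $i$ to lie in $[\alpha_i,\beta_i]$, and the capacity of $s\to q$ caps the total at $k$.

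The heart of the argument is the fractional certificate, and this is where I expect the real work. I would route $y$ through the network, sending $y_u$ units along the arc of each facility $u$. Feasibility of $y$ for $\stllp$ immediately gives the colour bounds $\alpha_i\le\sum_{u\in F_i}y_u\le\beta_i$ at the arcs $b_i\to t$ and the budget $\sum_u y_u\le k$ at $s\to q$, while flow conservation at $a_j$ and $b_i$ holds by construction. The one nontrivial inequality is the set lower bound $\sum_{u\in S_j}y_u\ge 1$ for every $v_j\in\bar D$. Here I would argue from half-integrality: the unit of demand at $v_j$ is served at cost $R_j$ by its primary and secondary facilities, and since every positive coordinate of $y$ equals $1/2$ or $1$, two facilities always carry a full unit --- either a single facility with $y_u=1$, or two facilities each with $y_u=1/2$ (one inside $\sB(v_j)$, supplied by constraint~\eqref{cst:str-half-assignment}, and, when the residual mass is routed to the nearest location $v'$, one inside $\sB(v')$, which likewise carries $y$-mass at least $1/2$). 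Disjointness of the $S_j$ ensures this mass is never double-counted across sets, so $\sum_{u\in S_j}y_u\ge 1$ and $y$ is a genuine feasible fractional flow.

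Finally, because the network has integral data its (now nonempty) flow polytope is integral, so a feasible integral flow exists; decoding it produces a set $C'$ with $|C'|\le k$, $|C'\cap S_j|\ge 1$ for all $j$, and $|C'\cap P_i|\in[\alpha_i,\beta_i]$. If $|C'|<k$ I would pad $C'$ up to exactly $k$ facilities by opening arbitrary unused facilities in colour classes still below their upper bound, the room for which is precisely $\sum_i\min(\beta_i,|F_i|)\ge k$ --- a consequence of the instance admitting a size-$k$ feasible solution. The resulting $C$ has all the claimed properties. The main obstacle, as flagged, is establishing $\sum_{u\in S_j}y_u\ge 1$ from the definition of the primary/secondary facilities and the half-integrality of $y$; the remainder is a routine flow-integrality argument.
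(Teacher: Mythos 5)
Your proposal is correct and follows essentially the same route as the paper: encode the set-coverage, colour-range, and budget constraints as a flow network with integral capacities and lower bounds, certify feasibility by routing the half-integral $y$ through it (the key point being that the at most two facilities in each $S_j$ carry total $y$-mass at least $1$, which holds because each serves $v_j$ with assignment at most $y_u$ and the assignments sum to $1$), and then invoke integrality of the flow polytope. The only differences are cosmetic — the paper places the budget cap on a final edge $t_1\to t_2$ and forces each $S_j$ to be hit by saturating unit-capacity source edges with a flow of value exactly $k$, whereas you use explicit lower bounds and pad the solution up to $k$ afterwards.
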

\begin{proof}
To show the existence of such a solution, we construct the following instance of network-flow. As in Figure~\ref{fig:network_flow}, we create a network with $6$ layers. Layer 0 consists of a single source vertex $s$ and layer $1$ consists of $L+1$ vertices corresponding to sets $S_1, \cdots, S_l$ and a dummy set $\bar{S}$. Moreover, for every $i\in [L]$, the source vertex $s$ is connected to $S_i$ with an edge of capacity $1$. There is also an edge from $s$ to $\bar{S}$ with capacity $k - L$. In layer $2$, there are $|F|$ vertices corresponding to each facility in $F$. Moreover, for every $i\in [L]$, $S_i$ is connected to the facilities $u_{i_1}$ and $u_{i_2}$ where $S_i = \{u_{i_1}, u_{i_2}\}$ with capacity $1$. Moreover, $\bar{S}$ has an edge to every facility with capacity $1$. In layer 3, we have a vertex $g_i$ for every group $i\in [\ell]$. Then, the edges between layer $2$ and layer $3$ specify the membership of the facilities to the groups. Note that as groups are disjoint, each facility is connected to exactly one group. Finally, the edges between the vertices in layer $4$, representing the groups, and the vertex $t_1$ in layer $5$ assures that the number of facilities selected from each group $i$ is in the given integral $[\alpha_i, \beta_i]$: there is an edge with respectively lower and upper  capacities of $\alpha_i, \beta_i$ connecting $g_i$ to $t_1$. Finally, there is an edge from $t_1$ to $t_2$ with capacity $k$ to bound the total number of selected centers.

It is straightforward to check that if there exists an integral flow of value $k$ from $s$ to $t_2$, then the set of facilities whose corresponding vertices in layer $2$ receive a unit of flow is a set of centers that satisfies all requirements of fair range clustering. So, it remains to show that this network has a flow of value $k$. Since all capacities are integral, by the integrality theorem of max-flow, this implies that an integral flow of value $k$ exists too. Hence, it suffices to show that the network has a (possibly fractional) flow of value $k$. Consider the half-integral optimal solution of $\stllp$, $y$, from which we constructed the sets $S_1, \cdots, S_L$. By the definition of $S_1, \cdots, S_L$, we can send one unit of flow from $s$ to each $S_i$ (corresponding to $y_{u_{i_1}}$ and $\min\{y_{u_{i_2}}, 1/2\}$) then $1/2$ flow is sent from $S_i$ to each of $u_{i_1}$ and $u_{i_2}$ (which can be the same vertex too). Note that $S_i$ are all disjoint and no facility receives more than one unit of flow at the end of this step. Then, we send $k-L$ flow from $s$ to $\bar{S}$ and from $\bar{S}$ to facilities so that the vertex corresponding to each facility $u$ receives exactly $y_u$ units of flow overall. Next, each facility vertex $u$ send $y_u$ units of flow to the color class $i$ that contains $u$. Note that by feasibility of $y$ for $\stllp$, no edge capacity will be violated. Lastly, as $\sum_{u\in F} y_u =k$, the amount of flow from $t_1$ to $t_2$ constructed from solution $y$ is exactly $k$.  
\end{proof}
\begin{figure}[t]
    \centering
    \includegraphics[scale=.8]{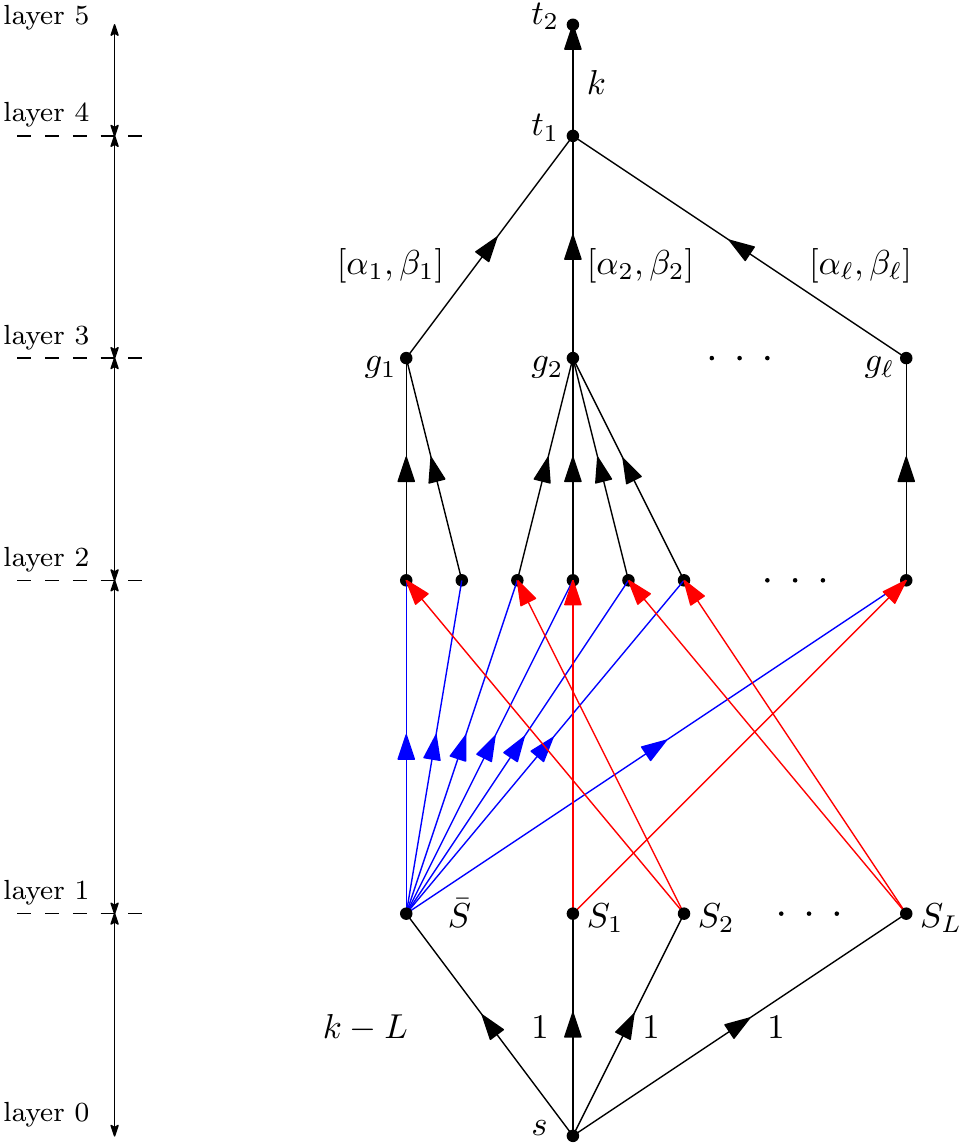}
    \caption{An example of network flow instance corresponding to a fair range clustering instance.}
    \label{fig:network_flow}
\end{figure}
Now, we are ready to prove the main theorem of this paper.
\begin{proof}[Proof of Theorem~\ref{thm:main}]
    Given an instance of the problem $\{(D, w), F\}$ with the specified set of fairness constraints, first we compute the sparse instance $(D', w')$ guaranteed in Theorem~\ref{thm:structured-fractional}. Next, we find a half-integral optimal solution $y$ of $\stllp(D', w')$ (Theorem~\ref{them:str-half-integral}). Then, by Lemma~\ref{lem:approximation-bound} and~\ref{lem:max-flow}, we can find a set of $k$ centers $C$, such that the clustering $(D', w')$ using the set of centers $C$ has cost at most $\big(\frac{9}{2}\big)^{p} \cdot \cost_{\stlp}(y)$. 
    Next, by the optimality of $y$ for $\stllp$ and by Lemma~\ref{lem:str-lp-opt-bound}, we show that the clustering cost of the instance with the set of centers $C$ is at most $\big(\frac{9}{2}\big)^{p} \cdot \cost_{\stlp}(y) = e^{O(p)} \cdot \opt_D$ where $\opt_D$ is the cost of an optimal solution of $\cllp(D,w)$.
    Finally, by Theorem~\ref{thm:sparsification-step} and raising the approximation factor to the power of $1/p$, the clustering using the center set $C$ is an $O(1)$-approximate solution on instance $(D, w)$ with the given fairness constraints $\{\alpha_i, \beta_i\}_{i\in [\ell]}$. 
\end{proof}

\section*{Conclusion} In this paper, we study the fair range clustering problem which is a generalization of several well-studied problems including fair $k$-center~\citep{kleindessner2019fair} and clustering under partition matroid. We designed efficient constant-factor approximation algorithms. Our result is the first pure multiplicative approximation algorithm for fair range clustering with general $\ell_p$-objective. 

\bibliography{fair-cls}
\bibliographystyle{abbrvnat}

\newpage
\appendix
\section{Preliminaries}
\subsection{Approximate Triangle Inequality}
\begin{lemma}[Lemma~A.1~\citep{makarychev2019performance}]\label{lem:p-norm-ineq}
Let $x, y_1, \cdots, y_n$ be non-negative real numbers and $\lambda>0, p\ge 1$. Then,
\begin{align*}
    (x + \sum_{i=1}^n y_i)^p \leq (1+\lambda)^{p-1} x^p + \Big(\frac{(1+\lambda)n}{\lambda}\Big)^{p-1} \sum_{i=1}^n y_i^p.
\end{align*}
\end{lemma}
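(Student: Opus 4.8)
The plan is to reduce the $n$-term statement to a clean two-term ``weighted'' approximate triangle inequality, apply that once to separate $x$ from the aggregate $\sum_i y_i$, and then absorb the aggregate using a standard power-mean bound. The entire argument rests on convexity of $t \mapsto t^p$ for $p \ge 1$, so I would first record that workhorse fact and tune its free parameter to match the claimed constants.

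First I would prove the following two-term inequality: for any $a,b \ge 0$ and $\lambda > 0$,
\begin{align*}
(a+b)^p \le (1+\lambda)^{p-1} a^p + \big(1+\tfrac{1}{\lambda}\big)^{p-1} b^p.
\end{align*}
This comes directly from Jensen's inequality applied to the convex map $\phi(t)=t^p$ with the two weights $w_1 = \frac{1}{1+\lambda}$ and $w_2 = \frac{\lambda}{1+\lambda}$ (so $w_1+w_2=1$): writing $a+b = w_1 \cdot \frac{a}{w_1} + w_2 \cdot \frac{b}{w_2}$ gives $(a+b)^p \le w_1^{1-p} a^p + w_2^{1-p} b^p$, and the two coefficients evaluate to exactly $(1+\lambda)^{p-1}$ and $(1+\frac1\lambda)^{p-1}$. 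The only thing to double-check is the weight choice, which is precisely what makes the first coefficient come out as $(1+\lambda)^{p-1}$.

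Next I would instantiate this with $a = x$ and $b = \sum_{i=1}^n y_i$, yielding $(x+\sum_i y_i)^p \le (1+\lambda)^{p-1} x^p + (1+\frac1\lambda)^{p-1}\big(\sum_i y_i\big)^p$. To handle the remaining $\big(\sum_i y_i\big)^p$ term, I would invoke the power-mean (Jensen) bound $\big(\sum_{i=1}^n y_i\big)^p \le n^{p-1}\sum_{i=1}^n y_i^p$, which follows from convexity of $t^p$ applied to the uniform average $\frac1n\sum_i y_i$ and then multiplying through by $n^p$. Combining the two displays and collecting the constant gives $(1+\frac1\lambda)^{p-1} n^{p-1} = \big((1+\frac1\lambda)\,n\big)^{p-1} = \big(\frac{(1+\lambda)n}{\lambda}\big)^{p-1}$, which is exactly the claimed coefficient, completing the proof.

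I do not expect a genuine obstacle here, since the statement is a standard consequence of convexity; the only care needed is bookkeeping of exponents and the boundary behavior. In particular I would note that the degenerate cases are harmless: when $p=1$ all exponents vanish and the inequality reduces to the ordinary triangle identity $x+\sum_i y_i = x + \sum_i y_i$, and when some or all $y_i$ vanish both the power-mean step and the final bound hold trivially. The one place to stay honest is to apply the power-mean inequality to the sum $\sum_i y_i$ \emph{before} (not after) raising to the $p$-th power, so that the factor $n^{p-1}$ appears rather than a spurious $n^p$.
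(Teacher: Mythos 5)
Your proof is correct. The paper itself gives no proof of this lemma---it is imported verbatim as Lemma~A.1 of the cited reference---so there is nothing internal to compare against; your argument (Jensen with weights $\frac{1}{1+\lambda},\frac{\lambda}{1+\lambda}$ to split off $x$, followed by the power-mean bound $\bigl(\sum_i y_i\bigr)^p \le n^{p-1}\sum_i y_i^p$) is the standard derivation and reproduces the stated constants exactly.
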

The following approximate variants of triangle inequalities are direct corollaries of the above lemma.
\begin{corollary}\label{cor:gen-approx-triangle}
Let $(P, d)$ be a metric space. Consider distance function $d(u,v)^p$. Then, $\forall u_0,\cdots, u_r\in P, d(u_0,u_r)^p \leq r^{p-1} \cdot \sum_{i=0}^{r-1}d(u_i,u_{i+1})^p$.
\end{corollary}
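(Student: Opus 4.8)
The plan is to derive the inequality from just two ingredients: the ordinary metric triangle inequality and a single application of Lemma~\ref{lem:p-norm-ineq}. First I would chain the triangle inequality along $u_0, u_1, \dots, u_r$ to obtain $d(u_0, u_r) \le \sum_{i=0}^{r-1} d(u_i, u_{i+1})$. Since all distances are nonnegative and $t \mapsto t^p$ is increasing for $p \ge 1$, raising both sides to the $p$-th power preserves the inequality and reduces the claim to showing $\left(\sum_{i=0}^{r-1} d(u_i, u_{i+1})\right)^p \le r^{p-1} \sum_{i=0}^{r-1} d(u_i, u_{i+1})^p$. In other words, the corollary is really the power-sum/convexity bound $\left(\sum_i a_i\right)^p \le r^{p-1} \sum_i a_i^p$ specialized to $a_i := d(u_i, u_{i+1})$.

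Next I would obtain this power-sum bound directly from Lemma~\ref{lem:p-norm-ineq}. Taking $x = d(u_0, u_1)$ and $y_i = d(u_i, u_{i+1})$ for $i = 1, \dots, r-1$, so that $n = r-1$, the lemma gives, for every $\lambda > 0$,
\[ \left(\sum_{i=0}^{r-1} d(u_i, u_{i+1})\right)^p \le (1+\lambda)^{p-1} d(u_0, u_1)^p + \left(\frac{(1+\lambda)(r-1)}{\lambda}\right)^{p-1} \sum_{i=1}^{r-1} d(u_i, u_{i+1})^p. \]
The crux is choosing $\lambda$ so that both coefficients collapse to the target $r^{p-1}$. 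The first coefficient equals $r^{p-1}$ exactly when $1 + \lambda = r$, and the second equals $r^{p-1}$ exactly when $\frac{(1+\lambda)(r-1)}{\lambda} = r$, which rearranges to the same condition $\lambda = r-1$. Hence the single choice $\lambda = r-1$ simultaneously balances the two terms, and the right-hand side becomes $r^{p-1} d(u_0,u_1)^p + r^{p-1} \sum_{i=1}^{r-1} d(u_i,u_{i+1})^p = r^{p-1} \sum_{i=0}^{r-1} d(u_i, u_{i+1})^p$, completing the argument.

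I do not expect a genuine obstacle; the only point that needs care is the degenerate case $r = 1$. There the prescribed $\lambda = r-1 = 0$ violates the hypothesis $\lambda > 0$ of Lemma~\ref{lem:p-norm-ineq}, but the claimed inequality then reads $d(u_0, u_1)^p \le 1 \cdot d(u_0, u_1)^p$ and holds with equality, so I would simply dispatch $r = 1$ separately and run the main argument for $r \ge 2$, where $\lambda = r-1 \ge 1 > 0$ is admissible.
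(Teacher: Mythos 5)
Your proposal is correct and matches the paper's own argument, which likewise chains the ordinary triangle inequality and then applies Lemma~\ref{lem:p-norm-ineq} with $x=d(u_0,u_1)$, $y_i=d(u_i,u_{i+1})$, and $\lambda=r-1$. Your explicit treatment of the degenerate case $r=1$ is a small extra precaution the paper omits, but the approach is the same.
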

\begin{proof}
Follows from the triangle inequality for the distance function $d$ and application of Lemma~\ref{lem:p-norm-ineq} with $\lambda = r-1$.
\end{proof}
\begin{corollary}\label{cor:approx-triangle}
Let $(P, d)$ be a metric space. Consider distance function $d(u,v)^p$. Then, $\forall u,v,w\in P, d(u,w)^p \leq 2^{p-1} \cdot (d(u,v)^p + d(v,w)^p)$.
\end{corollary}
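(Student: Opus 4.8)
The plan is to derive this as the three-point special case of the already-established general approximate triangle inequality (Corollary~\ref{cor:gen-approx-triangle}) with $r = 2$; equivalently, I would apply Lemma~\ref{lem:p-norm-ineq} directly with a single summand. First I would invoke the ordinary triangle inequality in the metric space $(P,d)$ to write $d(u,w) \le d(u,v) + d(v,w)$. Since $p \ge 1$, the map $t \mapsto t^p$ is nondecreasing on the nonnegative reals, so raising both sides to the $p$-th power preserves the inequality and gives $d(u,w)^p \le \big(d(u,v) + d(v,w)\big)^p$. What remains is to bound the $p$-th power of a sum by a constant multiple of the sum of $p$-th powers, which is precisely the content of Lemma~\ref{lem:p-norm-ineq}.

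Next I would instantiate Lemma~\ref{lem:p-norm-ineq} with $n = 1$, $x = d(u,v)$, $y_1 = d(v,w)$, and the free parameter $\lambda = 1$. With these choices the two coefficients in the lemma both collapse to $2^{p-1}$, namely $(1+\lambda)^{p-1} = 2^{p-1}$ and $\big((1+\lambda)n/\lambda\big)^{p-1} = 2^{p-1}$, so the lemma yields $\big(d(u,v) + d(v,w)\big)^p \le 2^{p-1}\big(d(u,v)^p + d(v,w)^p\big)$. Chaining this bound with the displayed consequence of the triangle inequality from the previous paragraph immediately produces the claimed inequality $d(u,w)^p \le 2^{p-1}\big(d(u,v)^p + d(v,w)^p\big)$.

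There is essentially no obstacle here: the statement is a direct specialization of a result proved one line earlier, and the only genuine thing to verify is that the symmetric choice $\lambda = 1$ is what forces the common constant $2^{p-1}$ in front of both terms. I would remark only that $\lambda = 1$ is in fact the optimal choice here—any other fixed $\lambda$ produces an asymmetric and larger pair of constants—and that $2^{p-1}$ cannot be improved, since equality holds when $d(u,v) = d(v,w)$ and $v$ lies on a geodesic between $u$ and $w$.
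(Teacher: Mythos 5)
Your proof is correct and matches the paper's intended argument: the paper treats this as a direct corollary of Lemma~\ref{lem:p-norm-ineq} (equivalently, the $r=2$ case of Corollary~\ref{cor:gen-approx-triangle} with $\lambda = r-1 = 1$), which is exactly your instantiation $n=1$, $\lambda=1$ applied after the ordinary triangle inequality.
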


\subsection{Totally Unimodular Matrices}
A matrix is called {\em totally unimodular} if every square submatrix of the given matrix has a determinant that is either $0$, $1$, or $-1$. This property has significant implications in linear programming (LP).
One of the key benefits of totally unimodular matrices is their close relationship with integer programming and LP. When a LP has a constraint matrix that is totally unimodular, it guarantees that the LP has an integral solution. Moreover, the integral solution can be found in polynomial time using various algorithms such as the ellipsoid method or the interior point method.

\begin{theorem}[\citet{ghouila1962caracterisation}]\label{thm:Ghouila-Houri}
    A matrix $A\in \mathbb{R}^{m\times n}$ is totally unimodular if and only if for every subset of the rows $R\subseteq [m]$, there is a partition of $R = R_{-} \uplus R_{+}$ such that for every $j\in [n]$,
    \[\sum_{i\in R_{+}} A_{ij} - \sum_{i\in R_{-}} A_{ij} \in \{-1, 0, 1\}.\]
\end{theorem}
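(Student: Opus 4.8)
The plan is to prove both implications of the Ghouila--Houri characterization, indexing rows by $[m]$ and columns by $[n]$. I will lean on two ingredients: the fact recalled above that a TU constraint matrix together with an integral right-hand side yields a polyhedron all of whose vertices are integral, and the elementary closure properties that total unimodularity is preserved under transposition, under passing to submatrices, and under adjoining identity rows or columns (each immediate from the determinant definition).

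For the ``only if'' direction, fix a TU matrix $A$ and a row subset $R\subseteq[m]$. First I would reduce the search for the partition $R = R_- \uplus R_+$ to finding a vector $y\in\{0,1\}^R$ via the substitution $\epsilon_i = 1-2y_i$, under which $\sum_{i\in R_+}A_{ij}-\sum_{i\in R_-}A_{ij} = d_j - 2\sum_{i\in R}A_{ij}y_i$, where $d_j := \sum_{i\in R}A_{ij}$. Since $\sum_{i\in R}A_{ij}y_i$ is an integer, membership of this quantity in $\{-1,0,1\}$ is equivalent to the box constraint $\lfloor d_j/2\rfloor \le \sum_{i\in R}A_{ij}y_i \le \lceil d_j/2\rceil$ for every column $j$. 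I would then consider the polytope $Q$ of all real $y\in[0,1]^R$ satisfying these two-sided column constraints. It is nonempty, because $y=\tfrac12\mathbf{1}$ makes each column sum equal to $d_j/2$, and it is bounded, hence has a vertex; its constraint matrix consists of $A_R^\top$ (and its negation) together with the identity, so it is TU, while all the bounds $\lfloor d_j/2\rfloor,\lceil d_j/2\rceil,0,1$ are integral. Therefore $Q$ is an integral polytope and any of its vertices is the sought $y\in\{0,1\}^R$, which yields the desired signing.

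For the ``if'' direction, I would induct on the order $t$ of a square submatrix $B$ of $A$, showing $\det B\in\{-1,0,1\}$. The base case $t=1$ follows by applying the hypothesis to a single row, which forces every entry of $A$ into $\{-1,0,1\}$. For the step, assume every $(t-1)$-minor lies in $\{-1,0,1\}$ and take a nonsingular $t\times t$ submatrix $B$. Set $w := \det(B)\,e_1^\top B^{-1}$; by Cramer's rule its entries are signed $(t-1)$-minors of $B$, so $w\in\{-1,0,1\}^t$ and $wB = \det(B)\,e_1^\top$. Let $R$ be the rows of $B$ on which $w$ is nonzero, and apply the hypothesis to obtain signs $\sigma\in\{\pm1\}^R$ with $s := \sigma^\top B \in \{-1,0,1\}^t$. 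Since $\sigma_i$ and $w_i$ are both odd on $R$, the vectors $\sigma^\top B$ and $wB$ are congruent modulo $2$ coordinatewise, so $s_1\equiv \det(B)\pmod 2$ while $s_j\equiv 0\pmod 2$ for $j\neq1$; as $s_j\in\{-1,0,1\}$ this forces $s_j=0$ for $j\neq1$, i.e.\ $\sigma^\top B = s_1 e_1^\top$. Nonsingularity of $B$ excludes $s_1=0$, and then $\sigma^\top = s_1 e_1^\top B^{-1} = (s_1/\det B)\,w$; comparing $|\sigma_i|=|w_i|=1$ on $R$ forces $|\det B| = |s_1| = 1$.

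I expect the sufficiency direction to be the delicate part: the parity comparison (that $\sigma$ and $w$, both odd on $R$, generate congruent signed column sums) combined with the adjugate identification of $w$ with $(t-1)$-minors is exactly what pins $\det B$ to $\pm1$ rather than merely to an odd integer. In the necessity direction the only points needing care are the nonemptiness of $Q$ and the total unimodularity of the restricted transpose $A_R^\top$ augmented by the identity, both of which are routine.
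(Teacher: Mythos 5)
Your proof is correct, but there is nothing in the paper to compare it against: the paper imports this statement as a classical result of Ghouila--Houri and never proves it, using it only as a black box inside Lemma~\ref{lem:tu-matrix}. What you have written is essentially the standard textbook argument (as in Schrijver's \emph{Theory of Linear and Integer Programming}, Thm.~19.3, stated there for columns rather than rows -- an immaterial difference since total unimodularity is preserved under transposition). Both halves check out: in the necessity direction the substitution $\epsilon_i = 1-2y_i$ correctly converts the signing condition into the two-sided integral box constraints $\lfloor d_j/2\rfloor \le \sum_{i\in R}A_{ij}y_i \le \lceil d_j/2\rceil$, the point $y=\tfrac12\mathbf{1}$ certifies nonemptiness, and the Hoffman--Kruskal integrality of the resulting polytope (whose constraint matrix is TU by the closure properties you list) produces the $0/1$ vertex; in the sufficiency direction the adjugate identity $wB=\det(B)\,e_1^\top$ with $w\in\{-1,0,1\}^t$ from the inductive hypothesis, followed by the mod-$2$ comparison of $\sigma^\top B$ with $wB$, correctly forces $\sigma^\top B=s_1e_1^\top$ with $s_1=\pm1$ and hence $|\det B|=1$. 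The only step you leave implicit is that $R\neq\emptyset$ in the inductive step (needed so that $\sigma\neq 0$ and the case $s_1=0$ genuinely contradicts nonsingularity); this is immediate since $wB=\det(B)\,e_1^\top\neq 0$ forces $w\neq 0$, but it is worth a sentence.
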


\section{Constructing well-separated locations}\label{sec:preprocessing}
In this section, we follow the {\em location consolidation} approach of~\cite{charikar2002constant} and output an instance with a sparsified set of locations where pairwise distance of survived locations are ``relatively large''. To describe this step, we work with the relaxation $\cllp$.
As in our algorithm we never modify the set $F$ and the range constraints $\{\alpha_i, \beta_i\}_{i\in [\ell]}$, to specify an instance, from now on we will only specify the set of clients $(D, w)$.
Consider an optimal fractional solution $(x,y)$ of $\cllp(D, w)$---{\em throughout the paper we refer to the cost of an optimal solution of this LP as $\opt_D$}. 
For any location $v\in D$, we define $\sR(v):= \big(\sum_{u\in F} x_{vu} \cdot d(v,u)^p\big)^{1/p}$ as the {\em fractional distance} of a unit of demand at location $v$ w.r.t. the optimal solution $(x,y)$. 
Note that if $(x,y)$ is an integral solution, then $\sR(v)$ is simply the distance from $v$ to its closest open facility, which is specified by $y$.
Next, we process the locations as follows. We sort locations in a non-decreasing order of their fractional distance; we index the locations in $D$ as $v_1,\cdots, v_n$ such that $\sR(v_1) \le \sR(v_2) \le \cdots \le \sR(v_n)$. 
We iterate over the locations in this order and for every $i\in [n]$, when we are processing $v_i$, we check for all locations $v_j$ s.t. $j>i$ and $d(v_i, v_j) \le 2^{1+1/p}\cdot \sR(v_j)$. For each such location, we move the demand at location $v_j$ to $v_i$ and set the demand at location $v_j$ to zero. 
At the end of this step, the algorithm returns a new demand function $w'$ supported on $D' \subseteq D$.

\begin{algorithm}[!ht]
	\begin{algorithmic}[1]
		\STATE {\bfseries Input:} $(x, y)$ is an optimal solution of $\cllp(D, w)$
		\STATE $\sR(v) \leftarrow (\sum_{u\in F} d(v,u)^p \cdot x_{vu})^{1/p}$ \textbf{for all} $v\in D$	
        \STATE $w'(v) \leftarrow w(v)$ \textbf{for all} $v\in D$
		\STATE {\bf sort} locations in $D$ so that $\R(v_1) \leq \cdots \leq \R(v_n)$
		\FOR{$i = 1$ to $n-1$}
		  \IF{$w'(v_i) > 0$}	
                \FOR{$j = i+1$ to $n$}
				    \IF{$w'(v_j)>0$ \textbf{~and~} $d(v_i, v_j) \leq 2^{1+1/p}\cdot \sR(v_j)$}
                            \STATE $w'(v_i) \leftarrow w'(v_i) + w(v_j)$ and $w'(v_j) \leftarrow 0$
				    \ENDIF
			\ENDFOR
		  \ENDIF
        \ENDFOR	
    \end{algorithmic}
	\caption{Consolidating Locations.}
	\label{alg:client-consolidation}
\end{algorithm}
For every location $v\in D'$, we define the {\em ball} $\sB(v):= \{u\in F | d(v, u) \le 2^{1/p}\sR(v)\}$ to denote the set of facilities at distance at most $2^{1/p}\sR(v)$ from $v$. The above claim implies the following lemma.
\begin{lemma}\label{lem:half-contribution}
For every $v\in D'$, $\sum_{u\in \sB(v)} x_{vu} \ge 1/2$. 
\end{lemma}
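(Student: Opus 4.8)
The plan is to prove the lemma by a Markov-type (averaging) argument applied to the quantities $d(v,u)^p$. The guiding intuition is that $\sR(v)^p = \sum_{u\in F} x_{vu}\,d(v,u)^p$ is exactly the $x_{v\cdot}$-weighted average of the $p$-th powers of the distances from $v$, while the facilities lying outside $\sB(v)$ are, by definition, precisely those for which $d(v,u)^p$ exceeds twice this average. Such facilities are therefore ``expensive'' relative to the average, so only a bounded amount of fractional assignment mass can be placed on them without the total cost overshooting $\sR(v)^p$.

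First I would assume $\sR(v)>0$ and split the sum defining $\sR(v)^p$ according to whether $u\in\sB(v)$ or not. For every $u\notin\sB(v)$ we have $d(v,u)>2^{1/p}\sR(v)$, that is $d(v,u)^p>2\,\sR(v)^p$, and since $x_{vu}\ge 0$ this yields
\begin{align*}
\sR(v)^p \;=\; \sum_{u\in F} x_{vu}\,d(v,u)^p \;\ge\; \sum_{u\notin \sB(v)} x_{vu}\,d(v,u)^p \;\ge\; 2\,\sR(v)^p \sum_{u\notin \sB(v)} x_{vu}.
\end{align*}
Dividing through by $\sR(v)^p>0$ gives $\sum_{u\notin\sB(v)} x_{vu}\le \tfrac{1}{2}$.

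To finish, I would invoke the assignment constraint~\eqref{cst:assgn} of $\cllp(D,w)$, namely $\sum_{u\in F} x_{vu}\ge 1$, and subtract the bound just obtained:
\begin{align*}
\sum_{u\in \sB(v)} x_{vu} \;=\; \sum_{u\in F} x_{vu} - \sum_{u\notin \sB(v)} x_{vu} \;\ge\; 1 - \tfrac{1}{2} \;=\; \tfrac{1}{2}.
\end{align*}
The only remaining point is the degenerate case $\sR(v)=0$: then every $u$ with $x_{vu}>0$ must satisfy $d(v,u)=0$ and hence lies in $\sB(v)$, so $\sum_{u\in\sB(v)} x_{vu}=\sum_{u\in F} x_{vu}\ge 1\ge \tfrac{1}{2}$. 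I do not expect a genuine obstacle here; the argument is entirely elementary, and the only care required is to justify the division by $\sR(v)^p$ and to dispatch the zero-distance case separately. Note also that although the lemma is stated for $v\in D'$, nothing about the consolidation step is needed: $\sR(v)$, $\sB(v)$, and the assignment constraint are all defined with respect to the fixed solution $(x,y)$ of $\cllp(D,w)$, and the inequality holds verbatim for every $v\in D\supseteq D'$.
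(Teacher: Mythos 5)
Your proof is correct and is essentially the same Markov-type averaging argument the paper uses: both bound the mass outside $\sB(v)$ by comparing $\sum_{u\notin\sB(v)} x_{vu}\, d(v,u)^p$ against $2\sR(v)^p$ and then invoke the assignment constraint. Your explicit treatment of the degenerate case $\sR(v)=0$ is a small point of extra care that the paper's one-line proof glosses over.
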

\begin{proof}
Note that 
\begin{align*}
    \sR(v)^p 
    &= \sum_{u\in \sB(v)} x_{vu} d(u,v)^p + \sum_{u\in F\setminus \sB(v)} x_{vu} d(u,v)^p \ge \sum_{u\in \sB(v)} x_{vu} d(u,v)^p + (1- \sum_{u\in \sB(v)} x_{vu}) \cdot 2 \sR(v)^p
\end{align*}
which proves that $\sum_{u\in \sB(v)} x_{vu} \ge 1/2$. 
\end{proof}
Next, we follow the reduction steps in~\citep{krishnaswamy2011matroid,swamy2016improved} to construct a fractional solution of $\cllp$ with further structures. 
We construct a fractional solution $(x',y)$ in which for each location $v$, its demands are served by the facilities in a {\em super ball} $\sP(v)$ with fraction $\gamma \ge 1/2$ and by the facilities in $\sP(v')$ with fraction $1-\gamma_u$ where $v' := \nn_{D'}(v)$ denotes the nearest location in $D'$ (other than $v$ itself) to $v$ and the collection of super balls $\{\sP(v)\}_{v\in D'}$ are disjoint. 

We start with the feasible solution $(x^1, y)$ of $\cllp(D', w')$ where $x^1_{vu} = x_{vu}$ for all $v\in D'$ and $u\in F$ and $(x,y)$ is the optimal solution of $\cllp(D, w)$ from which the sparsified set of locations $(D', w')$ is constructed. Note that by Theorem~\ref{thm:sparsification-step}-\ref{prop:feasibility}, $(x^1, y)$ is a feasible solution of $\cllp(D', w')$ with cost at most $\opt_D$. 

\paragraph{Private facilities.} In Theorem~\ref{thm:sparsification-step}-\ref{prop:separatedness}, we showed the collection of balls $\{\sB(v)\}_{v\in D'}$ are disjoint. Here, we construct a solution $(x^2, y)$ of $\cllp(D', w')$ with cost $e^{O(p)}\cdot \opt_D$ such that each facility $u\in F\setminus \bigcup_{v\in D'} \sB(v)$, serves the clients of at most one location. In other words, each (fractionally) opened facility outside the balls {\em only} serves one location. For each $v\in D'$, the super ball $\sP(v)$ consists of $\sB(v)$ and the private facilities of $v$. 

Consider a facility $u$ that does not belong to any ball in $\{\sB(v)\}_{v\in D'}$ and serves clients from more than one locations $v_1, \cdots, v_r$. Lets assume locations are ordered according their distance to $u$; $d(v_1, u) \le \cdots \le d(v_r, u)$.

\begin{claim}\label{clm:private-facility}
For every $j\in \{2,\cdots, r\}$ and every facility $u'\in \sB(v_1)$, $d(v_j, u') \leq 3d(v_j, u)$.
\end{claim}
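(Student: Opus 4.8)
The plan is to chain two applications of the triangle inequality and then bound each resulting term by $d(v_j,u)$, using both the ordering assumption and the fact that $u$ lies outside every ball.

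First I would record the two relevant distance facts. Since $u'\in \sB(v_1)$, by definition of the ball we have $d(v_1, u') \le 2^{1/p}\sR(v_1)$. On the other hand, by hypothesis $u$ does not belong to any ball in $\{\sB(v)\}_{v\in D'}$, and in particular $u\notin \sB(v_1)$, so $d(v_1, u) > 2^{1/p}\sR(v_1)$. Combining these gives the crucial comparison
\[
d(v_1, u') \le 2^{1/p}\sR(v_1) < d(v_1, u).
\]
Thus any facility inside $\sB(v_1)$ is strictly closer to $v_1$ than the outside facility $u$ is.

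Next I would invoke the ordering assumption $d(v_1, u) \le \cdots \le d(v_r, u)$, so that for every $j\ge 2$ we have $d(v_1, u) \le d(v_j, u)$. Two triangle inequalities then give
\[
d(v_j, u') \le d(v_j, v_1) + d(v_1, u') \le d(v_j, u) + d(u, v_1) + d(v_1, u').
\]
Now I bound the last two summands: by the ordering, $d(u, v_1) = d(v_1, u) \le d(v_j, u)$, and by the comparison above, $d(v_1, u') < d(v_1, u) \le d(v_j, u)$. Substituting yields $d(v_j, u') \le 3\,d(v_j, u)$, which is exactly the claim.

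I do not expect any real obstacle here; the statement is a clean metric estimate and the only ingredient beyond the triangle inequality is the elementary observation that an inside-ball facility beats the outside facility $u$ in distance to $v_1$. The one point to state carefully is that the factor of $3$ arises as $1+1+1$, with each unit coming from a term bounded by $d(v_j,u)$ — note that we use the \emph{ordinary} triangle inequality (not the approximate $\ell_p$ version of Corollary~\ref{cor:approx-triangle}), so no extra $p$-dependent constant appears and the bound holds uniformly for all $p\in[1,\infty)$.
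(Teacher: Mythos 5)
Your proof is correct, but it reaches the factor $3$ by a slightly different route than the paper, and the difference is worth noting. Both arguments chain the triangle inequality through $v_1$ and use the ordering $d(v_1,u)\le d(v_j,u)$; the divergence is in how the term $d(v_1,u')$ is controlled. The paper bounds $d(v_1,u') \le 2^{1/p}\sR(v_1) \le \tfrac{1}{2}d(v_j,v_1)$ using the well-separatedness property~\ref{prop:separatedness} of the consolidated locations, obtaining $d(v_j,u')\le \tfrac{3}{2}d(v_j,v_1)\le 3d(v_j,u)$. You instead observe that since $u$ lies outside every ball, in particular $u\notin\sB(v_1)$, one has $d(v_1,u)>2^{1/p}\sR(v_1)\ge d(v_1,u')$, so the inside-ball facility beats $u$ in distance to $v_1$; this lets you bound each of the three summands $d(v_j,u)+d(u,v_1)+d(v_1,u')$ by $d(v_j,u)$ directly. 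Your version has the small advantage of not invoking~\ref{prop:separatedness} at all --- it needs only the definition of $\sB(v_1)$ and the standing hypothesis that $u$ is outside all balls --- while the paper's version yields the intermediate estimate $d(v_j,u')\le\tfrac{3}{2}d(v_j,v_1)$, which is the form reused elsewhere (e.g., in Observation~\ref{obs:sparse-dist-rel}). Your closing remark that only the ordinary triangle inequality is needed, with no $p$-dependent loss, is accurate and matches the paper.
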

\begin{proof}
Since for every $j\in \{2,\cdots, r\}$, $d(v_j, u) \ge d(v_1, u)$,
\begin{align}\label{eq:private-facility}
    d(v_j, v_1) \le d(v_j, u) + d(v_1, u) \le 2 d(v_j, u).
\end{align}
Moreover, for a facility $u'\in \sB(v_1)$,
\begin{align*}
    d(v_j, u') 
    &\le d(v_j, v_1) + d(v_1, u') \\
    &\le \frac{3}{2}\cdot d(v_j, v_1)  &&\rhd\text{Theorem~\ref{thm:sparsification-step}-\ref{prop:separatedness}} \\
    &\le 3d(v_j, u) &&\rhd\text{Eq.~\eqref{eq:private-facility}}
\end{align*}
\end{proof}

Now, in $(x^2, y)$, for every $j\in \{2,\cdots,r\}$, we set the assignment of $v_j$ to $u$ to zero; $x^2_{v_j u} =0$ and instead increase the assignment of $v_j$ to facilities $u\in \sB(v_1)$ with total increase of $x^1_{v_j u}$. A formal procedure of this reassignment step is described in Algorithm~\ref{alg:private-facility-reassignment}.  
\begin{algorithm}[!ht]
	\begin{algorithmic}[1]
		\STATE {\bfseries Input:} $(x^1, y)$ is a feasible solution of $\cllp(D',w')$
        
        \STATE $x^2_{vu} \leftarrow x^1_{vu}$ \textbf{for all} $v\in D', u\in F$
        
        \FORALL{$u\in F \setminus \bigcup_{v\in D'} \sB(v)$ with $y(u)>0$}
            \STATE {\bf sort} locations $\{v\in D' \sep x^1_{vu}>0\}$ according to their distance to $u$ so that $d(v_1,u) \le \cdots \le d(v_r, u)$
            \FOR{$j=2$ to $r$}
                \STATE $x^2_{v_j u} \leftarrow 0$ and $b = x^1_{v_j u}$
                \FORALL{$u'\in \sB(v_1)$}
                    \STATE $x^2_{v_j u'} \leftarrow \min(y_{u'}, b + x^1_{v_j u'})$ and $b \leftarrow b - \min(y_{u'} - x^1_{v_j u'}, b)$
                \ENDFOR
            \ENDFOR
        \ENDFOR
    \end{algorithmic}
	\caption{Assignments to Private Facilities.}
	\label{alg:private-facility-reassignment}
\end{algorithm}
\begin{lemma}\label{ref:reassignment}
$(x^2,y)$ is a feasible solution of $\cllp(D', w')$ with cost at most $3^{p} \cdot \opt_D$.
\end{lemma}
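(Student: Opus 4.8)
The plan is to verify two things: feasibility of $(x^2,y)$ for $\cllp(D',w')$, and that its cost is within a $3^p$ factor of that of $(x^1,y)$ (whose cost is at most $\opt_D$). Three of the four constraint families are immediate. Constraints~\eqref{cst:fairness} and~\eqref{cst:k-center} depend only on $y$, which Algorithm~\ref{alg:private-facility-reassignment} never touches, so they continue to hold. The box constraints~\eqref{cst:under_assignment} are preserved by construction: the only coordinates that grow are $x^2_{v_j u'}$ for $u'\in\sB(v_1)$, each explicitly capped at $y_{u'}$ by the $\min$ in the update rule; the emptied coordinate $x^2_{v_j u}$ is set to $0$; and every other coordinate keeps its feasible value from $x^1$.

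For the cost, I would note that assignments change only on relocated mass. Each unit of $v_j$'s demand that is moved off a facility $u\notin\bigcup_{v\in D'}\sB(v)$ onto some $u'\in\sB(v_1)$---where $v_1$ is the closest-to-$u$ location among those served by $u$---satisfies $d(v_j,u')\le 3\,d(v_j,u)$ by Claim~\ref{clm:private-facility}, hence $d(v_j,u')^p\le 3^p\,d(v_j,u)^p$. Bounding the new per-unit cost of each relocated unit by $3^p$ times its old per-unit cost and leaving the untouched assignments as they are gives $\cost(x^2)\le 3^p\,\cost(x^1)\le 3^p\,\opt_D$, since $3^p\ge 1$.

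The crux is constraint~\eqref{cst:assgn}: after $v_j$'s mass on $u$ is forcibly zeroed, can $\sB(v_1)$ absorb the relocated amount without leaving $v_j$ short of a unit of service? I would argue yes via a capacity count. Because~\eqref{cst:under_assignment} couples a single location with a single facility, the capacity available to $v_j$ within $\sB(v_1)$ is the full $\sum_{u'\in\sB(v_1)}y_{u'}$, which is at least $1/2$ by Lemma~\ref{lem:half-contribution} applied to $v_1$. On the demand side, the total mass $v_j$ ever needs to route into $\sB(v_1)$ is its original assignment to facilities in $\sB(v_1)$ plus everything moved in from outside-ball facilities whose nearest served location is $v_1$; since the balls $\{\sB(v)\}_{v\in D'}$ are pairwise disjoint (Theorem~\ref{thm:sparsification-step}-\ref{prop:separatedness}) and $v_1\neq v_j$, these are disjoint subsets of $v_j$'s mass lying outside its own ball $\sB(v_j)$, and that mass is $1-\sum_{u'\in\sB(v_j)}x^1_{v_j u'}\le 1/2$ by Lemma~\ref{lem:half-contribution} together with the tightness $\sum_{u}x^1_{v_j u}=1$ inherited from optimality of $(x,y)$ for $\cllp(D,w)$. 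Hence $v_j$ places at most $1/2$ into $\sB(v_1)$, which fits within the $\ge 1/2$ capacity, so all of its demand is reassigned and~\eqref{cst:assgn} is preserved. The main obstacle is precisely this bookkeeping---ensuring that the cumulative mass arriving at a fixed ball over all relevant outside-ball facilities is charged against the single outside-$\sB(v_j)$ budget rather than double counted---which is exactly where ball disjointness and $v_1\neq v_j$ are needed.
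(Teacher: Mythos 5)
Your proof is correct and follows essentially the same route as the paper's: feasibility reduces to checking constraint~\eqref{cst:assgn} via the same capacity count (the $y$-mass in $\sB(v_1)$ is at least $1/2$ by Lemma~\ref{lem:half-contribution}, while the mass $v_j$ must route there is part of its at-most-$1/2$ mass outside $\sB(v_j)$), and the cost bound comes from Claim~\ref{clm:private-facility} exactly as in the paper. If anything, your bookkeeping of which portions of $v_j$'s demand are charged against the outside-$\sB(v_j)$ budget is stated more carefully than the paper's displayed inequality.
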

\begin{proof}
First we 
prove the feasibility of $(x^2, y)$. 
\begin{align*}
    \sum_{u_1\in \sB(v_1)} y_{u_1} 
    &\ge \sum_{u_1\in \sB(v_1)} x^1_{v_1 u_1} &&\rhd\text{by feasibility of $(x^1, y)$} \\
    &\ge 1/2 &&\rhd\text{Lemma~\ref{lem:half-contribution}}
\end{align*}
By another application of Lemma~\ref{lem:half-contribution}, for every $j\in \{2,\cdots, r\}$,
\begin{align*}
    x^1_{v_j u} + \sum_{u'_j\in F\setminus \sB(v_j)} x^1_{v_j u'_j}
    \le 1 - \sum_{u'_j\in \sB(v_j)} x_{v_j u'_j} \le 1/2 
\end{align*}
Hence, the facilities in $\sB(v_1)$ have enough slack to accommodate extra $x^1_{v_j u}$ in their assignment from $v_j$. This implies that $(x^2, y)$ as constructed in Algorithm~\ref{alg:private-facility-reassignment} is a feasible solution of $\cllp(D', w')$.

Moreover, since by Claim~\ref{clm:private-facility} for every $j\in \{2, \cdots, r\}$ and $u'\in \sB(v_1)$, $d(v_j,u') \le 3 d(v_j, u)$, $\cost(x^2, y) \le 3^p \cdot \cost(x^1, y) = 3^p\cdot \opt_D$.
\end{proof}

\section{Omitted Proofs of Section~\ref{sec:decription}}\label{sec:proof-sec-2}
\begin{proof}[Proof of Theorem~\ref{thm:sparsification-step}]
\ref{prop:separatedness}: Wlog, lets assume that $v_1$ located before $v_2$ in the ordering considered by Algorithm~\ref{alg:client-consolidation}; hence, $\sR(v_2) \ge \sR(v_1)$. However, if $d(v_1,v_2) \ge 2^{1+1/p}\sR(v_2)$, then Algorithm~\ref{alg:client-consolidation} moves the demand of $v_2$ to $v_1$ and $v_2$ will not survive the process which is a contradiction. Thus, $d(v_1,v_2)\le 2^{1+1/p}\sR(v_2) = 2^{1+1/p}\cdot \max\{\sR(v_2),\sR(v_1)\}$.

\ref{prop:feasibility}: In~$\cllp$, no constraint depends on demands and thus the constraints of $\cllp(D', w')$ are subsets of the ones in $\cllp(D, w)$. Thus, $(x,y)$ is a feasible solution of $\cllp(D', w')$ too.
    
Next, we show that the cost of $(x,y)$ w.r.t.~$\cllp(D',w')$ is not more that the cost $(x,y)$ w.r.t.~$\cllp(D, w)$. For each location $v$, let $\bar{v}$ denote the location in $D'$ that receives the demand of $v$ by the end of Algorithm~\ref{alg:client-consolidation}.
\begin{align*}
    \cost(x,y; D',w') := \sum_{q\in D'} w'(q) \sum_{u\in F} x_{qu} d(q,u)^p 
    &= \sum_{q\in D'} w'(q) \sR(q)^p \\
    &= \sum_{q\in D'} \sum_{v\in D: \bar{v} = q} w(v) \sR(q)^p \\
    &= \sum_{v\in D} w(v) \sR(\bar{v})^p \\
    &\le \sum_{v\in D} w(v) \sR(v)^p = \opt_D
\end{align*}

\ref{prop:cost-preservation}: Algorithm~\ref{alg:client-consolidation} may either move the demand of a location $v$ to some other location $\bar{v}$ or keep it at $v$. Let $v' = \bar{v}$ in the former case and $v' = v$ in the latter case. In either case $d(v, v') \le 2^{1+1/p}\cdot \sR(v)$. Therefore, by the approximate triangle inequality (Corollary~\ref{cor:approx-triangle}),
\begin{align*}
    d(v, C)^p \le 2^{p-1} (d(v, v')^p + d(v', C)^p) \le 2^{2p} \sR(v)^p + 2^{p-1}d(v', C)^p
\end{align*}
Summing over all locations,
\begin{align*}
    \sum_{v\in D} w(v) d(v, C)^p \le \sum_{v\in D} w(v) \big(4^p \sR(v)^p +  2^{p-1} d(v', C)^p\big) \le 4^p \cdot \opt_D + 2^{p-1} \cdot z   
\end{align*}
\end{proof}
\begin{proof}[Proof of Theorem~\ref{thm:structured-fractional}]
We initialize $(\bar{x}, \bar{y})$ to the solution $(x^2,y)$ as constructed in Algorithm~\ref{alg:private-facility-reassignment}; $\bar{x} = x^2$ and $\bar{y} = y$. We keep $\bar{y} = y$ throughout the process but modify $\bar{x}$ to satisfy the desired properties.

By Claim~\ref{clm:private-facility}, we can partition open facilities, $\{u\in F \sep y_{u} > 0\}$ into disjoint super balls $\{\sP(v)\}_{v\in D'}$. So,~\ref{prop:ball-inclusion} will be satisfied by $\bar{y}$.

Next, we modify the assignment vector $x_2$ so that for every $v\in D'$ and $u\in \sP(v) \setminus \sB(v)$, $\bar{x}_{vu} >0$ only if $\sum_{u\in \sB(v)} \bar{y} < 1$. Similarly, for every $v\in D'$ and $u\in F\setminus \sP(v)$, $\bar{x}_{vu}>0$ iff $\sum_{u\in \sP(v)} \bar{y}_{v} < 1$. So, by the construction of $(\bar{x},\bar{y})$,~\ref{prop:optimality} is satisfied.

Consider a location $v\in D'$. Since the total $\bar{y}$ contributions in each of $\sB(v)$ and $\sB(v')$ is at least half, in $\bar{x}$ we assign the remaining demand of each location $v$, which are not satisfied by the facilities in $\sP(v)$, to the facilities in $\sB(v')$. So, $(\bar{x},\bar{y})$ satisfies~\ref{prop:external-center} and~\ref{prop:half-center}.

Without loss of generality, we can assume that~\ref{prop:near-center-further-than-private-center} holds, otherwise, we could transfer the $\bar{x}_{vu}$ fraction of assignment of $v$ from $u$ to a facility in $\sB(v')$, set $\bar{y}_u = 0$, and reduce the total cost. Again, this reassignment is always possible because for every $v\in D'$,
\[\bar{x}_{vu} + \sum_{u'\in F\setminus \sP(v)} \bar{x}_{vu'} \le 1/2 \le \sum_{u'\in \sB(v')} \bar{y}_{u'}.\]

Finally, the last property follows from Claim~\ref{clm:private-facility} and the following bound. For every $u''\in \sB(v'')$ and $u'\in \sB(v')$ where $w\notin \{v, v'\}$,
\begin{align*}
    d(v,u') 
    &\le d(v, v') + d(v', u') &&\rhd\text{triangle inequality} \\
    &\le (1 + \frac{1}{2}) \cdot d(v,v') \\ 
    &\le (1 + \frac{1}{2}) \cdot d(v,v'') &&\rhd d(v,v') \le d(v,v'')\\
    & \le 3 \cdot d(v,u''),
\end{align*}
where the second inequality holds since $d(v,v') \ge 2^{1+\frac{1}{p}}\sR(v')$ and $d(v',u')\le 2^{\frac{1}{p}}\sR(v')$. Similarly, the fourth inequality follows from $d(v,v'') \ge 2^{1+\frac{1}{p}}\sR(v'')$ and $d(v'',u'')\le 2^{\frac{1}{p}}\sR(v'')$ (which implies that $d(v,u'')\ge \frac{1}{2}d(v,v'')$). Thus, $\cost(\bar{x}, \bar{y}) \le 3^{p} \cdot \cost(x^2, y) \le 9^p \cdot \opt_D$.  
\end{proof}
\end{document}